\documentclass[10pt]{article} 

\usepackage[table]{xcolor}
\usepackage[utf8]{inputenc}
\usepackage{amsmath,amssymb,amsthm}
\usepackage{a4wide}
\usepackage{hyperref}
\usepackage{natbib}
\usepackage{authblk}
\usepackage{graphicx}
\usepackage{enumitem}
\setlist[itemize]{leftmargin=*, nosep}
\setlist[enumerate]{leftmargin=*, nosep}

% TikZ packages for figures
\usepackage{tikz}
\usetikzlibrary{arrows.meta, positioning, shapes.geometric, shapes.multipart,
  fit, backgrounds, calc, decorations.pathreplacing, decorations.markings,
  patterns, shadows}

% tcolorbox for styled theorem/definition boxes
\usepackage[most]{tcolorbox}

% Theorem counters
\newtheorem{theorem_inner}{Theorem}[section]
\newtheorem{proposition_inner}[theorem_inner]{Proposition}
\newtheorem{definition_inner}[theorem_inner]{Definition}

% Shaded environments with rounded corners and tight spacing
\newenvironment{theorem}
  {\begin{tcolorbox}[colback=gray!8, colframe=gray!40, rounded corners=all,
    arc=3pt, boxrule=0.4pt, top=4pt, bottom=4pt, left=6pt, right=6pt]
   \begin{theorem_inner}}
  {\end{theorem_inner}\end{tcolorbox}}

\newenvironment{proposition}
  {\begin{tcolorbox}[colback=gray!8, colframe=gray!40, rounded corners=all,
    arc=3pt, boxrule=0.4pt, top=4pt, bottom=4pt, left=6pt, right=6pt]
   \begin{proposition_inner}}
  {\end{proposition_inner}\end{tcolorbox}}

\newenvironment{definition}
  {\begin{tcolorbox}[colback=gray!8, colframe=gray!40, rounded corners=all,
    arc=3pt, boxrule=0.4pt, top=4pt, bottom=4pt, left=6pt, right=6pt]
   \begin{definition_inner}}
  {\end{definition_inner}\end{tcolorbox}}

\title{\LARGE \textbf{A Survey of Safe Reinforcement Learning and Constrained MDPs: \ A Technical Survey on Single-Agent and Multi-Agent Safety}}

\author[1]{Ankita Kushwaha}
\author[1]{Kiran Ravish}
\author[1]{Preeti Lamba}
\author[1]{Pawan Kumar}
\author[2]{Anuj Mahajan}
\affil[1]{International Institute of Information Technology, Hyderabad, India}
\affil[2]{Meta SuperIntelligence, USA}

\begin{document}

\maketitle
\begin{abstract}
\begin{tcolorbox}[colback=gray!8, colframe=gray!30, rounded corners=all,
  arc=4pt, boxrule=0.3pt, top=6pt, bottom=6pt, left=8pt, right=8pt]
Safe Reinforcement Learning (SafeRL) is the subfield of reinforcement learning that explicitly deals with safety constraints during the learning and deployment of agents. This survey provides a mathematically rigorous overview of SafeRL formulations based on Constrained Markov Decision Processes (CMDPs) and extensions to Multi-Agent Safe RL (SafeMARL). We review theoretical foundations of CMDPs, covering definitions, constrained optimization techniques, and fundamental theorems. We then summarize state-of-the-art algorithms in SafeRL for single agents, including policy gradient methods with safety guarantees and safe exploration strategies, as well as recent advances in SafeMARL for cooperative and competitive settings. Additionally, we propose five open research problems to advance the field, with three focusing on SafeMARL. Each problem is described with motivation, key challenges, and related prior work. This survey is intended as a technical guide for researchers interested in SafeRL and SafeMARL, highlighting key concepts, methods, and open future research directions.
\end{tcolorbox}
\end{abstract}

\tableofcontents

\section{Introduction} Reinforcement learning (RL) has achieved remarkable success in domains such as games, robotics, and autonomous systems. However, when deploying RL in real-world \emph{safety-critical} applications (e.g., autonomous driving, healthcare, robotics), it is essential to ensure that the learning agent avoids catastrophic failures or unsafe behaviors \cite{Amodei2016,Garcia2015}.  \textbf{Safe Reinforcement Learning (SafeRL)} addresses this need by augmenting standard RL objectives with safety considerations, typically in the form of constraints on the agent’s behavior or environment outcomes. 

\begin{definition}
The goal in SafeRL is to maximize performance (cumulative reward) while satisfying safety constraints during training and deployment.
\end{definition}

A common framework for SafeRL is the \textbf{Constrained Markov Decision Process (CMDP)} introduced by \cite{Altman1999}. In a CMDP, an agent seeks to maximize expected return subject to one or more constraints (e.g., bounds on certain costs or probabilities of unsafe events). This framework allows formalizing safety requirements as mathematical constraints and provides tools from constrained optimization and control theory to enforce them. SafeRL algorithms often leverage CMDP theory to find policies that respect constraints (at least approximately) while learning efficiently. SafeRL has gained significant attention in recent years. Early work in SafeRL explored modifications of the RL objective to encode risk or safety (e.g., worst-case guarantees \cite{Heger1994}, risk-sensitive criteria \cite{Borkar2002}, or probability of failure constraints \cite{Geibel2005}). More recent approaches explicitly enforce constraints during learning using techniques like Lagrange multipliers, trust-region methods, or safety monitors. There have been comprehensive surveys of SafeRL (e.g., \cite{Garcia2015}) and increasing theoretical study of constrained RL algorithms \cite{Achiam2017,Chow2018}. An emerging frontier is \textbf{Multi-Agent Safe Reinforcement Learning (SafeMARL)}, which considers multiple agents learning and interacting under safety constraints. SafeMARL is crucial for applications like coordinated robotics, drone swarms, and autonomous driving with multiple vehicles, where safety conditions involve interactions among agents. SafeMARL introduces additional challenges such as coordinating safety in a team, handling the coupling of constraints across agents, and new solution concepts (like safe equilibria \cite{Ganzfried2022} in competitive settings). While single-agent SafeRL is relatively well-studied, SafeMARL remains a young research area with many open problems \cite{ElSayed2021,Gu2023MultiRobot}. This survey provides 

\begin{itemize} 
\item A rigorous introduction to SafeRL formulations based on CMDPs, including mathematical definitions and theorems. 
\item A review of state-of-the-art SafeRL methods for single agents, and their extensions to multi-agent scenarios (SafeMARL), highlighting major algorithms and theoretical guarantees. 
\item A discussion of related work and different perspectives on safety in RL (e.g., risk-sensitive RL, robust RL, safe exploration techniques). 
\item Five open research problems that we believe are important for advancing SafeRL and SafeMARL. Three of these focus specifically on challenges in SafeMARL. \end{itemize}

Our target audience is researchers familiar with fundamental RL concepts who seek a deeper understanding of how to incorporate safety in RL. We assume knowledge of basic RL (Markov decision processes, policy optimization, etc.) and provide definitions and notation for SafeRL topics. We believe that by the end of this paper, the reader should be equipped with the theoretical background of CMDPs, knowledge of leading algorithms in SafeRL/SafeMARL, and insight into promising research directions in this field.

\begin{figure}[t!]
\centering
\begin{tikzpicture}[
  every node/.style={font=\footnotesize},
  mainbox/.style={rectangle, draw=blue!70!black, fill=blue!15, rounded corners=3pt,
    minimum height=0.7cm, minimum width=2.2cm, align=center, font=\footnotesize\bfseries,
    line width=0.8pt},
  secbox/.style={rectangle, draw=teal!70!black, fill=teal!8, rounded corners=2pt,
    minimum height=0.55cm, minimum width=1.6cm, align=center, font=\scriptsize,
    line width=0.6pt},
  leafbox/.style={rectangle, draw=gray!60, fill=gray!5, rounded corners=2pt,
    minimum height=0.45cm, align=center, font=\scriptsize, line width=0.5pt},
  arr/.style={-{Stealth[length=2mm]}, line width=0.6pt, gray!70!black},
]

% === Top node ===
\node[mainbox, minimum width=4cm] (top) at (0,0) {A Survey of SafeRL and CMDPs};

% === Level 1: Three sections, evenly spaced ===
\node[mainbox] (sec3) at (-4.8,-1.3) {Foundations\\[-1pt](Sec.~3)};
\node[mainbox] (sec4) at (0,-1.3) {Methods\\[-1pt](Sec.~4)};
\node[mainbox] (sec5) at (4.8,-1.3) {Open Problems\\[-1pt](Sec.~5)};

% Arrows from top — all go to same y-level via a shared horizontal line
\draw[arr] (top.south) -- ++(0,-0.2) -| (sec3.north);
\draw[arr] (top.south) -- (sec4.north);
\draw[arr] (top.south) -- ++(0,-0.2) -| (sec5.north);

% === Sec 3 children (stacked vertically under Foundations) ===
\node[secbox, minimum width=2.2cm] (mdp)   at (-4.8,-2.6) {MDPs};
\node[secbox, minimum width=2.2cm] (cmdp)  at (-4.8,-3.3) {CMDPs};
\node[secbox, minimum width=2.2cm] (ctype) at (-4.8,-4.0) {Constraint Types};
\node[leafbox, minimum width=2.2cm] (lagr)  at (-4.8,-4.7) {Lagrangian \& LP};
\node[leafbox, minimum width=2.2cm] (thm)   at (-4.8,-5.3) {Theorems};

\draw[arr] (sec3.south) -- (mdp.north);
\draw[arr, gray!50] (mdp.south) -- (cmdp.north);
\draw[arr, gray!50] (cmdp.south) -- (ctype.north);
\draw[arr, gray!50] (ctype.south) -- (lagr.north);
\draw[arr, gray!50] (lagr.south) -- (thm.north);

% === Sec 4 children (stacked vertically under Methods) ===
\node[secbox, minimum width=2.2cm] (lag4)    at (0,-2.6) {Lagrangian Methods};
\node[secbox, minimum width=2.2cm] (cpo4)    at (0,-3.3) {CPO / Trust Region};
\node[secbox, minimum width=2.2cm] (shield4) at (0,-4.0) {Safety Shields};
\node[secbox, minimum width=2.2cm] (marl4)   at (0,-4.7) {SafeMARL};

\draw[arr] (sec4.south) -- (lag4.north);
\draw[arr, gray!50] (lag4.south) -- (cpo4.north);
\draw[arr, gray!50] (cpo4.south) -- (shield4.north);
\draw[arr, gray!50] (shield4.south) -- (marl4.north);

% SafeMARL sub-leaves
\node[leafbox, minimum width=2.2cm] (cent)   at (0,-5.4) {Centralized (MACPO)};
\node[leafbox, minimum width=2.2cm] (decent) at (0,-6.0) {Decentralized};
\draw[arr, gray!50] (marl4.south) -- (cent.north);
\draw[arr, gray!50] (cent.south) -- (decent.north);

% === Sec 5 children (stacked list under Open Problems) ===
\node[leafbox, minimum width=2.6cm] (p1) at (4.8,-2.6) {P1: Zero-violation};
\node[leafbox, minimum width=2.6cm] (p2) at (4.8,-3.2) {P2: Partial Observability};
\node[leafbox, minimum width=2.6cm] (p3) at (4.8,-3.8) {P3: Decentral.\ SafeMARL};
\node[leafbox, minimum width=2.6cm] (p4) at (4.8,-4.4) {P4: Competitive SafeMARL};
\node[leafbox, minimum width=2.6cm] (p5) at (4.8,-5.0) {P5: Non-stationarity};

\draw[arr] (sec5.south) -- (p1.north);
\draw[arr, gray!50] (p1.south) -- (p2.north);
\draw[arr, gray!50] (p2.south) -- (p3.north);
\draw[arr, gray!50] (p3.south) -- (p4.north);
\draw[arr, gray!50] (p4.south) -- (p5.north);

\end{tikzpicture}
\caption{Overview and structure of this survey. The paper covers theoretical foundations of SafeRL via CMDPs (Sec.~3), state-of-the-art methods for single-agent and multi-agent settings (Sec.~4), and five open research problems (Sec.~5).}
\label{fig:survey_roadmap}
\end{figure}

\section{Related Work} SafeRL has been surveyed and reviewed from multiple angles. Garc{'\i}a and Fern{'a}ndez \cite{Garcia2015} provide an earlier comprehensive survey of SafeRL methods up to 2015, categorizing approaches into modifications of the optimality criterion (e.g., constrained or risk-sensitive objectives) and modifications of the exploration process (e.g., using external knowledge or risk metrics to guide learning). They classify safety criteria into four groups: 

\begin{itemize} 

\item \emph{Constrained criteria} – optimization with explicit constraints on policies \cite{Geibel2005}, 

\item \emph{Worst-case (robust) criteria} – optimize the minimal possible return under adversarial conditions \cite{Heger1994}, 

\item \emph{Risk-sensitive criteria} – incorporate risk measures like variance or CVaR (Conditional Value-at-Risk) into the objective \cite{Borkar2002,Tamar2015}, 

\item \emph{Others} – e.g., criteria based on higher moments or probability of ruin. 

\end{itemize} 

Our survey focuses primarily on the constrained criterion approach (CMDPs), which has become the prevalent formalism for SafeRL in recent years. Since 2015, the field has advanced with new algorithms and theoretical results. Recent reviews such as Wachi \emph{et al.} \cite{Wachi2024} examine various formulations of safety constraints (e.g., how constraints are represented and enforced) and draw connections between them. Another forthcoming survey by Gu \emph{et al.} \cite{Gu2024Review} provides an extensive review of SafeRL methods, theory, and applications, reflecting the growing maturity of the field. Domain-specific surveys have also emerged, such as Brunke \emph{et al.} \cite{brunke2021srlrobotics} who provide a comprehensive review of safe RL methods for robotics. These works indicate an increasing interest in unifying SafeRL concepts and developing a systematic understanding of safety constraint representations and their implications. On the multi-agent side, SafeMARL has been less surveyed due to its emergent status. Gu \emph{et al.} \cite{Gu2023MultiRobot} investigate safe multi-robot control tasks and propose algorithms like Multi-Agent Constrained Policy Optimization (MACPO). Some recent papers introduce safe multi-agent learning algorithms or frameworks \cite{ElSayed2021,Gu2023MultiRobot,Zhang2024Scalable}, but a comprehensive survey of SafeMARL is still lacking. Our work contributes by reviewing both single-agent and multi-agent safe RL in one document and highlighting SafeMARL-specific challenges. Other related areas include \textbf{robust RL} (handling model uncertainty or adversarial disturbances) and \textbf{reward hacking / alignment} (ensuring the specified reward leads to intended safe behavior). While robust RL (e.g., solving worst-case MDPs) and SafeRL share some techniques (like min-max optimization), they address different problem formulations (uncertainty vs. explicit constraints). Similarly, reward specification and alignment problems are complementary to SafeRL: one can combine learned reward shaping with SafeRL constraints to yield agents that both seek correct objectives and stay safe \cite{Amodei2016,Achiam2017}. Benchmark suites such as the AI Safety Gridworlds \cite{leike2017aisafetygridworlds} and SafeLife \cite{wainwright2021safelife10exploringeffects} specifically test for specification robustness and side-effect avoidance. We touch upon these connections where relevant. In summary, our survey builds upon and extends prior work by providing a focused treatment of CMDP-based SafeRL and the novel area of SafeMARL, presented in a rigorous yet accessible manner for researchers.

\section{Safe Reinforcement Learning and Constrained MDPs: Foundations} In this section, we introduce the theoretical foundations of SafeRL with an emphasis on Constrained Markov Decision Processes (CMDPs). We present formal definitions, notation, and key mathematical results that underpin SafeRL algorithms. We also discuss how safety constraints are formulated and how they can be tackled using constrained optimization techniques in an RL context.

\subsection{Markov Decision Processes (MDPs)}

\begin{definition}[Markov Decision Process]
We begin with the standard Markov Decision Process (MDP) formulation of an RL problem. An MDP is defined by the tuple $M = (\mathcal{S}, \mathcal{A}, P, r, \gamma)$, where

\begin{itemize}

\item $\mathcal{S}$ is a (finite or continuous) set of states.

\item $\mathcal{A}$ is a set of actions available to the agent.

\item $P(s'|s,a)$ is the transition probability function (Markovian dynamics), giving the distribution over next states $s'$ when action $a$ is taken in state $s$. \item $r(s,a)$ is a reward function (or $r(s,a,s')$ including next state, depending on context) giving a scalar reward for executing action $a$ in state $s$.

\item $\gamma \in [0,1]$ is a discount factor that weights immediate vs. future rewards (with $\gamma<1$ typically for infinite-horizon problems).

\end{itemize}
\end{definition}

\begin{definition}[Policy and Value Function]
A (stationary) \textbf{policy} $\pi$ is a mapping from states to a distribution over actions. We denote $\pi(a|s)$ as the probability of taking action $a$ in state $s$ under $\pi$.

The value function for a policy $\pi$ is
\[
V^\pi(s) = \mathbb{E}_\pi \left[ \sum_{t=0}^{\infty} \gamma^t r(s_t, a_t) \mid s_0 = s \right],
\]
 the expected cumulative discounted reward starting from state $s$ and following $\pi$. The goal in standard RL is to find an optimal policy $\pi^*$ maximizing $V^{\pi}(s)$ for all $s$ (or maximizing a specific initial state or distribution performance). Equivalently, one maximizes the {\bf return}
 $J(\pi) = \mathbb{E}_{s_0 \sim \rho}\left[ V^\pi(s_0)\right]$
 for some initial state distribution $\rho$. In unconstrained RL, $\pi^*$ solves $\max_{\pi} J(\pi)$.
\end{definition}
 
 \subsection{Constrained Markov Decision Processes (CMDPs)} A Constrained Markov Decision Process extends an MDP with the concept of \emph{costs} (or negative rewards) and associated constraints. 
 Formally, a CMDP can be defined as:
\[
M_C = (S, A, P, r, \{ c^{(i)} \}_{i=1}^{m}, \gamma),
\]
where \( r(s,a) \) is the primary reward as before, and \( c^{(i)}(s,a) \) for \( i=1, \dots, m \) are \( m \) \textbf{cost functions} (or penalty functions)
 encoding the aspects of the task we want to constrain. Each cost function usually corresponds to a particular notion of “unsafe” behavior or resource usage that should be limited. For example, $c^{(1)}(s,a)$ might be an indicator of entering an unsafe state or a measure of damage/risk at state $s$. A policy $\pi$ in a CMDP has not only a reward return $J(\pi) = \mathbb{E}_{\pi}[\sum_t \gamma^t r(s_t,a_t)],$ but also a cost return for each cost function:
\[
J_c^{(i)}(\pi) = \mathbb{E}_{\pi} \left[ \sum_{t=0}^{\infty} \gamma^t c^{(i)}(s_t, a_t) \right].
\]

The safe RL objective can be posed as a {\bf constrained optimization problem} 

\begin{equation} 
\begin{aligned} & \text{maximize}_{\pi} && J(\pi) = \mathbb{E}_\pi\left[\sum{t} \gamma^t r(s_t,a_t)\right], \ & \text{subject to} && J_{c^{(i)}}(\pi) \leq d_i, \quad i = 1,2,\dots,m, 
\end{aligned} 
\label{eq:cmdp_objective} 
\end{equation} 
where $d_i$ is a specified threshold for the $i$-th cost (safety limit). The set
$$\Pi_{\text{safe}} = \{\pi \mid J_{c^{(i)}}(\pi) \le d_i,; \: \forall i \}$$ is called the \textbf{feasible policy set}.

\begin{figure}[t!]
\centering
\begin{tikzpicture}[
  node distance=1.2cm and 2.5cm,
  every node/.style={font=\small},
  block/.style={rectangle, draw=blue!70!black, fill=blue!5, rounded corners=4pt,
    minimum height=1.2cm, minimum width=2.5cm, align=center, line width=0.8pt},
  constraint/.style={rectangle, draw=red!70!black, fill=red!5, rounded corners=4pt,
    minimum height=0.9cm, minimum width=2.8cm, align=center, line width=0.8pt,
    font=\small},
  arr/.style={-{Stealth[length=3mm]}, line width=0.9pt},
  lbl/.style={font=\footnotesize, fill=white, inner sep=1.5pt},
]

% Main blocks
\node[block, fill=blue!12] (agent) {\textbf{Agent}\\Policy $\pi(a|s)$};
\node[block, fill=teal!10, right=3.5cm of agent] (env) {\textbf{Environment}\\$P(s'|s,a)$};

% Action arrow (top)
\draw[arr, blue!70!black] (agent.north east) -- ++(0,0.5) -| node[lbl, pos=0.25] {Action $a_t$} (env.north west);

% State arrow (bottom, going back)
\draw[arr, teal!70!black] (env.south west) -- ++(0,-0.5) -| node[lbl, pos=0.25, below] {State $s_{t+1}$} (agent.south east);

% Reward arrow
\draw[arr, green!60!black] (env.west) -- node[lbl, above] {Reward $r(s,a)$} (agent.east);

% Cost arrow (parallel, below reward)
\draw[arr, red!70!black] ([yshift=-0.3cm]env.west) -- node[lbl, below] {Cost $c^{(i)}(s,a)$} ([yshift=-0.3cm]agent.east);

% Constraint check box
\node[constraint, below=1.5cm of $(agent)!0.5!(env)$] (check) {\textbf{Safety Constraint Check}\\$J_{c^{(i)}}(\pi) \leq d_i, \;\forall i$};

% Arrows to constraint check
\draw[arr, red!50!black, dashed] (agent.south) -- node[lbl, left, xshift=-7pt, font=\scriptsize, align=center] {Evaluate\\policy} (check.north west);
\draw[arr, red!50!black, dashed] (env.south) -- node[lbl, right, xshift=7pt, font=\scriptsize, align=center] {Accumulate\\costs} (check.north east);

% Feasible label
\node[below=0.3cm of check, font=\footnotesize\itshape, text=red!60!black] {$\pi \in \Pi_{\text{safe}}$: Feasible policy set};

\end{tikzpicture}
\caption{The Constrained MDP (CMDP) agent-environment interaction loop. In addition to the standard reward signal $r(s,a)$, the environment provides cost signals $c^{(i)}(s,a)$. The agent must find a policy $\pi$ that maximizes cumulative reward while ensuring all cost constraints $J_{c^{(i)}}(\pi) \leq d_i$ are satisfied.}
\label{fig:cmdp_loop}
\end{figure} We assume this set is non-empty (the constraints are attainable). Problem \eqref{eq:cmdp_objective} is the standard formulation of SafeRL as a CMDP optimization problem \cite{Altman1999}. It is a constrained Markov decision problem which, in principle, can be solved via dynamic programming or linear programming if the model is known and state-action spaces are small. Eitan Altman’s foundational work \cite{Altman1999} established that for finite CMDPs, there exists an optimal policy that is stationary (time-independent) and, if multiple constraints are present, possibly stochastic (randomized). Intuitively, sometimes a mixture of actions is required to exactly satisfy multiple constraints: a deterministic policy might violate a constraint, whereas a stochastic policy can blend strategies to meet the constraint bounds exactly.

\paragraph{Lagrangian formulation:} A common theoretical approach to solve CMDPs is to form the Lagrangian of \eqref{eq:cmdp_objective}. Introduce Lagrange multipliers $\lambda = (\lambda_1,\dots,\lambda_m) \ge 0$ for the $m$ constraints. 
The Lagrangian for policy $\pi$ is
\[
\mathcal{L}(\pi, \lambda) = J(\pi) + \sum_{i=1}^{m} \lambda_i \left( d_i - J_c^{(i)}(\pi) \right).
\]

We can rearrange $\mathcal{L}(\pi,\lambda) = J(\pi) - \sum_i \lambda_i J_{c^{(i)}}(\pi) + \sum_i \lambda_i d_i$. Often it is written as $J(\pi) - \sum_i \lambda_i (J_{c^{(i)}}(\pi) - d_i)$ or $J(\pi) - \sum_i \lambda_i J_{c^{(i)}}(\pi)$ up to constants, since $\sum_i \lambda_i d_i$ does not depend on $\pi$. For a fixed $\lambda$, the term 

$$J(\pi) - \sum_i \lambda_i J_{c^{(i)}}(\pi) = \mathbb{E}_{\pi} \left[ \sum_t \gamma^t (r(s_t,a_t) - \sum_i \lambda_i c^{(i)}(s_t,a_t)) \right].$$ 

This suggests defining a \emph{penalized reward} 
$$r_{\lambda}(s,a) = r(s,a) - \sum_{i=1}^m \lambda_i c^{(i)}(s,a).$$ 
For any $\lambda \geq 0$, we can compute
\[
\pi^*(\lambda) = \arg\max_{\pi} \: \mathcal{L}(\pi, \lambda) = \arg\max_{\pi} \: \mathbb{E}_\pi \left[ \sum_{t} \gamma^t r_{\lambda}(s_t, a_t) \right],
\]
which is the optimal policy for the MDP with reward $r_\lambda$. In other words, $\pi^{(\lambda)}$ is the unconstrained optimal policy if we treat $-\lambda_i$ as a weight (penalty) for cost $c^{(i)}$. The dual function is 
\begin{align}
g(\lambda) = \max_\pi \: \mathcal{L}(\pi,\lambda) = J(\pi^{(\lambda)}) + \sum_i \lambda_i(d_i - J_{c^{(i)}}(\pi^{(\lambda)})).
\end{align}
We then minimize $g(\lambda)$ over $\lambda \ge 0$ 
to find the best multipliers
\[
\lambda^* = \arg\min_{\lambda \geq 0} \: g(\lambda).
\]
Under certain conditions (convexity or linearity of the CMDP problem), strong duality holds and solving the dual yields the primal optimum \cite{Altman1999,Achiam2017}. The optimal policy $\pi^*$ for the CMDP is then $\pi^*(\lambda^*)$ (or a mixture of policies if needed when the optimum is not unique). The Lagrangian perspective is very useful in SafeRL for the following reasons
\begin{itemize} 

\item It leads to \textbf{Lagrange multiplier methods} for safe RL, where one maintains estimates of $\lambda_i$ during learning and adjusts them based on constraint violations. Many algorithms in practice (Section 4) use this primal-dual approach. 

\item It gives insight into how costs trade off with reward: $\lambda_i$ can be interpreted as the “price” of violating constraint $i$. A high $\lambda_i$ at optimum means the agent sacrifices a lot of reward to reduce cost $i$. 

\item The gradient of $g(\lambda)$ can be derived as $ \nabla_{\lambda_i} g(\lambda) = d_i - J_{c^{(i)}}(\pi^*(\lambda))$. This leads to a gradient descent update: $\lambda_i \leftarrow \lambda_i + \alpha (J_{c^{(i)}}(\pi) - d_i)$ which intuitively increases the penalty $\lambda_i$ if constraint $i$ is violated ($J_{c^{(i)}} > d_i$) and decreases it if the constraint is satisfied with slack.

\end{itemize}

\begin{figure}[t!]
\centering
\begin{tikzpicture}[
  every node/.style={font=\small},
  block/.style={rectangle, draw=#1!70!black, fill=#1!8, rounded corners=3pt,
    minimum height=1.0cm, minimum width=3.8cm, align=center, line width=0.8pt},
  block/.default=blue,
  arr/.style={-{Stealth[length=3mm]}, line width=0.8pt, #1!70!black},
  arr/.default=gray,
  lbl/.style={font=\footnotesize, inner sep=2pt},
]

% Four boxes in a rectangle, aligned pairwise
% Top-left: Policy
\node[block=blue] (policy) at (-3.2, 0) {\textbf{Policy} $\pi_\theta$\\(Primal variable)};
% Top-right: Evaluate
\node[block=teal] (eval) at (3.2, 0) {\textbf{Evaluate}\\$J(\pi_\theta)$, $J_{c^{(i)}}(\pi_\theta)$};
% Bottom-left: Penalized Reward (directly below Policy)
\node[block=purple] (pen) at (-3.2, -3.0) {\textbf{Penalized Reward}\\$r_\lambda = r - \sum_i \lambda_i c^{(i)}$};
% Bottom-right: Multiplier (directly below Evaluate)
\node[block=red] (lambda) at (3.2, -3.0) {\textbf{Multiplier} $\lambda \geq 0$\\(Dual variable)};

% Arrow 1: Policy -> Evaluate (top, left to right)
\draw[arr=blue] (policy.east) -- node[lbl, above, yshift=4pt] {Roll out episodes} (eval.west);

% Arrow 2: Evaluate -> Multiplier (right side, straight down)
\draw[arr=red] (eval.south) -- node[lbl, right, font=\scriptsize, align=left] {Constraint gap:\\$J_{c^{(i)}}(\pi_\theta) - d_i$} (lambda.north);

% Arrow 3: Multiplier -> Penalized Reward (bottom, right to left)
\draw[arr=purple] (lambda.west) -- node[lbl, below, yshift=-4pt, font=\scriptsize] {Penalty weights $\lambda_i$} (pen.east);

% Arrow 4: Penalized Reward -> Policy (left side, straight up)
\draw[arr=blue] (pen.north) -- node[lbl, left, font=\scriptsize, align=right] {Policy update:\\$\theta \leftarrow \theta + \beta \nabla_\theta J_\lambda(\pi_\theta)$} (policy.south);

% Primal/Dual labels (outside the loop)
\node[font=\scriptsize\bfseries, text=blue!70!black] at (-3.2, 0.85) {Primal: $\max_\theta$};
\node[font=\scriptsize\bfseries, text=red!70!black] at (3.2, -3.85) {Dual: $\min_\lambda$};

% Dual update equation
\node[font=\scriptsize\itshape, text=red!60!black] at (3.2, -4.25) {$\lambda_i \leftarrow [\lambda_i + \alpha(J_{c^{(i)}} - d_i)]_+$};

% Convergence annotation
\node[font=\footnotesize\itshape, text=gray!60!black] at (0, -5.0) {At convergence: $(\pi^*, \lambda^*)$ solves the CMDP};

\end{tikzpicture}
\caption{The Lagrangian primal-dual optimization framework for CMDPs. The policy $\pi_\theta$ (primal variable) is updated to maximize the penalized reward $r_\lambda$, while the Lagrange multipliers $\lambda$ (dual variables) are updated based on constraint violations. This alternating optimization converges to the CMDP solution under strong duality.}
\label{fig:lagrangian_loop}
\end{figure}

\paragraph{Linear programming solution:} For finite-state CMDPs, an alternative formulation is via occupancy measures and linear programming. One can define variables $x(s,a)$ representing the discounted visitation frequency of state-action pair $(s,a)$ under a stationary policy. The constraints of an optimal occupancy measure include flow conservation (infinite-horizon occupancy distribution) and positivity. 

% \subsection*{Expected Total Discounted Reward via Occupancy Measure}

The total expected discounted reward under a stationary policy $\pi$ is defined as:
\[
J(\pi) = \mathbb{E}_\pi \left[ \sum_{t=0}^\infty \gamma^t r(s_t, a_t) \right].
\]

The occupancy measure $x(s,a)$ represents the discounted visitation frequency of state-action pairs under policy $\pi$
\[
x(s,a) = (1 - \gamma) \sum_{t=0}^\infty \gamma^t \Pr(s_t = s, a_t = a \mid \pi)
\]
which is the expected discounted number of times that the agent visits state $s$ and takes action $a.$

By unrolling the expectation, the expected return can be rewritten in terms of the occupancy measure:
\[
J(\pi) = \sum_{s,a} x(s,a) r(s,a)
\]

This expression is the key to formulating the CMDP as a linear program since the objective becomes linear in $x(s,a)$. Furthermore, the expected cumulative cost constraints can be similarly written as
\[
J_{c^{(i)}}(\pi) = \sum_{s,a} x(s,a) c^{(i)}(s,a) \le d_i, \quad \forall i=1, \ldots, m
\]
which are also linear in $x(s,a)$. This linear structure is crucial as it allows the CMDP optimization problem to be expressed as a linear program (LP).
The CMDP can then be written as a linear program: 
\begin{align*} 
\max_{x(s,a)\ge0} \ & \sum_{s,a} x(s,a) r(s,a) \\
\text{s.t. }\ & \sum_{s,a} x(s,a) c^{(i)}(s,a) \le d_i, \quad i=1,\dots,m, \\
& \sum_{a} x(s,a) = (1-\gamma) \rho(s) + \gamma \sum_{s',a'} P(s|s',a') x(s',a'),  \forall s, \end{align*} 
where $\rho(s)$ is the starting state distribution. This linear program can be solved efficiently for moderate state-action sizes and yields an optimal (potentially stochastic) policy for the CMDP \cite{Altman1999}. While model-based and not directly applicable to large-scale problems, this approach provides theoretical validation that CMDPs are solvable optimally and also serves as a basis for certain planning algorithms in safe RL.

\subsection{Constraint Types and Safety Specifications} The formulation above uses expected cumulative costs as constraints. This is a flexible and popular choice in SafeRL research, but it is worth noting other types of constraints that have been considered 

\begin{enumerate} 

\item \textbf{Instantaneous constraints:} instead of long-term expected cost, one could require $c(s_t,a_t) \le d$ at every time step $t$ (almost surely). This is a stricter requirement (no violations at all). Such hard constraints are challenging for learning, and often enforced via external mechanisms (like safety filters). In CMDP theory, instantaneous constraints can be encoded by making any violation transition to an absorbing failure state with heavy penalty. 

\subsubsection*{Examples of Instantaneous Constraints in Safe RL}

Instantaneous constraints refer to safety requirements that must hold \emph{at every time step} during the agent's execution, rather than only in expectation over a trajectory. Below are typical examples of such constraints arising in real-world applications.

Robotics --Torque or Force Limits \cite{Dalal2018,Cheng2019,Achiam2017}: Robotic manipulators and mobile robots have strict actuator limits. A common constraint is $||\tau_t|| \leq \tau_{\max}$, where $\tau_t$ is the torque vector applied at time $t$. Exceeding these limits even once can cause irreversible hardware damage. Therefore, the torque constraint must hold at every step. Dalal et al.\ \cite{Dalal2018} proposed a safety layer that projects RL actions to satisfy such constraints, while Cheng et al.\ \cite{Cheng2019} combined model-free RL with control barrier functions to enforce actuator limits during learning.

Autonomous Driving --Collision Avoidance \cite{Shalev2017,Isele2018,nguyen2023safe}: Autonomous vehicles must avoid collisions at all times. This is often modeled as a minimum distance constraint, $\text{distance}(s_t) \geq d_{\text{safe}}$, where $d_{\text{safe}}$ is a safety margin. Unlike reward penalties for collisions, instantaneous constraints aim to ensure that no collision ever occurs, even during learning. Shalev-Shwartz et al.\ \cite{Shalev2017} formalized hard safety constraints via the Responsibility-Sensitive Safety (RSS) framework, while Isele et al.\ \cite{Isele2018} used prediction-based constraints to safely learn intersection-handling behaviors.

\begin{figure}[t!]
\centering
\begin{tikzpicture}[
  every node/.style={font=\small},
  arr/.style={-{Stealth[length=2mm]}, line width=0.7pt, #1},
  arr/.default={gray!60!black},
]

% Road
\fill[gray!15] (-5.5,-1.2) rectangle (5.5,1.2);
\draw[gray!50, dashed, line width=0.8pt] (-5.5,0) -- (5.5,0);
\draw[gray!70, line width=1.2pt] (-5.5,1.2) -- (5.5,1.2);
\draw[gray!70, line width=1.2pt] (-5.5,-1.2) -- (5.5,-1.2);

% Ego vehicle (rectangle)
\node[rectangle, draw=blue!70!black, fill=blue!20, minimum width=1.4cm,
  minimum height=0.7cm, rounded corners=2pt, line width=0.8pt,
  font=\scriptsize\bfseries] (ego) at (-1.5, -0.5) {Ego};

% Other vehicle ahead
\node[rectangle, draw=red!70!black, fill=red!15, minimum width=1.4cm,
  minimum height=0.7cm, rounded corners=2pt, line width=0.8pt,
  font=\scriptsize] (other) at (2.5, -0.5) {Vehicle};

% Safety distance zone (dashed)
\draw[red!60!black, dashed, line width=0.8pt]
  (0.2, -1.05) rectangle (2.5+0.9, 0.05);
\node[font=\tiny, text=red!60!black] at (1.8, -1.3) {Safety zone: $d_{\text{safe}}$};

% Distance arrow
\draw[{Stealth[length=1.5mm]}-{Stealth[length=1.5mm]}, red!60!black, line width=0.6pt]
  (-0.8, 0.35) -- node[above, font=\tiny, text=red!60!black] {$\text{dist}(s_t) \geq d_{\text{safe}}$} (1.8, 0.35);

% RL Agent box (above road)
\node[rectangle, draw=blue!70!black, fill=blue!10, rounded corners=3pt,
  minimum height=0.8cm, minimum width=2.2cm, align=center, line width=0.7pt,
  font=\scriptsize] (agent) at (-1.5, 2.8) {\textbf{RL Agent} $\pi(a|s)$};

% Sensor perception
\node[rectangle, draw=teal!70!black, fill=teal!8, rounded corners=3pt,
  minimum height=0.7cm, minimum width=2.0cm, align=center, line width=0.6pt,
  font=\scriptsize] (sensor) at (-1.5, 1.8) {Sensors / Perception};

% Safety shield
\node[rectangle, draw=orange!70!black, fill=orange!10, rounded corners=3pt,
  minimum height=0.7cm, minimum width=2.0cm, align=center, line width=0.6pt,
  font=\scriptsize] (shield) at (3.0, 2.8) {Safety Filter};

% Constraint box
\node[rectangle, draw=red!60!black, fill=red!8, rounded corners=2pt,
  minimum height=0.6cm, minimum width=2.2cm, align=center, line width=0.6pt,
  font=\tiny] (cst) at (3.0, 1.8) {$c(s_t,a_t) = \mathbf{1}[\text{dist} < d_{\text{safe}}]$};

% Arrows
\draw[arr] (sensor.north) -- (agent.south);
\draw[arr=blue!60!black] (agent.east) -- node[above, font=\tiny] {$a_t$} (shield.west);
\draw[arr=green!50!black] (shield.south) -- node[right, font=\tiny] {$a_t^{\text{safe}}$} (cst.north);
\draw[arr=teal!60!black] (ego.north) -- ++(0,0.35) -| node[pos=0.25, left, font=\tiny] {$s_t$} (sensor.south);

% Reward annotation
\node[font=\tiny, text=green!50!black, align=center] at (-4.2, 2.8) {Reward:\\reach goal\\quickly};
\node[font=\tiny, text=red!50!black, align=center] at (-4.2, 1.8) {Constraint:\\zero collisions};

\end{tikzpicture}
\caption{SafeRL for autonomous driving. The RL agent receives state observations from sensors and outputs actions (steering, acceleration). A safety filter ensures the executed action satisfies the instantaneous constraint $\text{dist}(s_t) \geq d_{\text{safe}}$, preventing collisions while optimizing travel efficiency.}
\label{fig:app_driving}
\end{figure}

Aerial Vehicles (Drones) --Altitude Constraints \cite{Fisac2019,Gillula2012,Yuan2022}: Drones often operate within restricted altitude corridors, leading to constraints of the form $z_{\min} \leq z_t \leq z_{\max}$. Exceeding altitude boundaries may result in collisions with terrain (if $z_t < z_{\min}$) or violation of airspace regulations (if $z_t > z_{\max}$). Such constraints must be enforced at all times. Fisac et al.\ \cite{Fisac2019} proposed a Hamilton-Jacobi reachability-based framework guaranteeing state constraint satisfaction for quadrotors, and Gillula and Tomlin \cite{Gillula2012} demonstrated guaranteed safe online learning on a quadrotor with altitude bounds.

Medical Applications --Dose Limits in Treatment Planning \cite{Tseng2017,Sprouts2022}: In adaptive radiation therapy or drug administration, instantaneous dosage constraints are essential. The instantaneous constraint may take the form $\text{dose}_t \leq d_{\max}$, limiting the maximum dose administered at each step to prevent severe side effects. Tseng et al.\ \cite{Tseng2017} developed a deep RL framework for dose fractionation in lung cancer constrained by tissue complication limits, and Sprouts et al.\ \cite{Sprouts2022} trained a DRL-based treatment planner enforcing hard dose-volume constraints on organs at risk.

\begin{figure}[t!]
\centering
\begin{tikzpicture}[
  every node/.style={font=\small},
  box/.style={rectangle, draw=#1!70!black, fill=#1!8, rounded corners=3pt,
    minimum height=0.9cm, align=center, line width=0.7pt, font=\scriptsize},
  box/.default=blue,
  arr/.style={-{Stealth[length=2mm]}, line width=0.7pt, #1},
  arr/.default={gray!60!black},
]

% Patient state
\node[box=teal, minimum width=2.2cm] (patient) at (0, 0) {\textbf{Patient State}\\$s_t$: vitals, labs};

% RL Treatment Policy
\node[box=blue, minimum width=2.4cm] (policy) at (0, 2.5) {\textbf{RL Treatment}\\Policy $\pi(a_t|s_t)$};

% Treatment action
\node[box=purple, minimum width=2.2cm] (action) at (4.5, 2.5) {\textbf{Treatment}\\$a_t$: drug dose, \\ventilator settings};

% Safety constraint
\node[box=red, minimum width=2.4cm] (safety) at (4.5, 0) {\textbf{Safety Constraint}\\$\text{dose}_t \leq d_{\max}$\\$\Pr[\text{mortality}] \leq \delta$};

% Outcome
\node[box=green!50!black, minimum width=2.2cm] (outcome) at (2.25, -2.0) {\textbf{Patient Outcome}\\Recovery / Adverse event};

% Arrows
\draw[arr=blue!60!black] (patient.north) -- node[left, font=\tiny, text=teal!60!black] {observe} (policy.south);
\draw[arr=blue!60!black] (policy.east) -- node[above, font=\tiny] {prescribe $a_t$} (action.west);
\draw[arr=red!60!black] (action.south) -- node[right, font=\tiny, text=red!60!black] {check} (safety.north);
\draw[arr=green!50!black] (safety.south) -- ++(0,-0.5) -| node[pos=0.25, right, font=\tiny, align=left] {safe\\action} ([xshift=0.3cm]outcome.north);

% Feedback loop: outcome back to patient state
\draw[arr=teal!60!black] ([xshift=-0.3cm]outcome.north) -- ++(0,0.7) -| node[pos=0.25, above, font=\tiny] {$s_{t+1}$} (patient.south);

% Reward and cost annotations
\node[font=\tiny, text=green!50!black, align=center] at (-2.5, 2.5) {\textbf{Reward:}\\patient recovery\\$r_t = f(\text{vitals})$};
\node[font=\tiny, text=red!50!black, align=center] at (-2.5, 0.8) {\textbf{Cost:}\\adverse effects\\$c_t = g(\text{dose}_t)$};

% Dashed arrow for constraint violation
\draw[arr=red!40!black, dashed] (safety.west) -- node[below, font=\tiny, text=red!50!black] {reject if unsafe} (patient.east);

\end{tikzpicture}
\caption{SafeRL for healthcare treatment planning. The RL policy observes patient state (vitals, lab values) and prescribes treatment actions (drug dosage, ventilator settings). A safety constraint enforces dose limits ($\text{dose}_t \leq d_{\max}$) and bounds mortality risk ($\Pr[\text{mortality}] \leq \delta$). Unsafe actions are rejected and the patient state evolves based on the administered treatment.}
\label{fig:app_healthcare}
\end{figure}

Power Systems --Voltage and Current Limits \cite{Vu2021,Duan2020}: Power grids are subject to operational safety limits such as $V_t \in [V_{\min}, V_{\max}]$ for voltage levels or similar constraints on current. Violations could cause system instability, equipment damage, or even large-scale blackouts. Safe control must respect these constraints instantaneously. Vu et al.\ \cite{Vu2021} proposed barrier function-based safe RL for emergency voltage control with hard safety bounds, while Duan et al.\ \cite{Duan2020} developed a DRL-based autonomous voltage control agent that maintains voltage within operational limits.

Industrial Process Control --Pressure Limits \cite{KimOh2022}: In chemical plants, nuclear reactors, and manufacturing systems, pressure constraints of the form $p_t \leq p_{\max}$ are typical. Exceeding pressure thresholds even once may lead to catastrophic failures such as explosions or hazardous material leaks. Kim and Oh \cite{KimOh2022} developed safe model-based RL using Lyapunov barrier functions for chemical process control (CSTR) with hard state and input constraints on temperature and pressure.

These types of instantaneous constraints are significantly harder to handle than cumulative (long-term) cost constraints since they require the policy to remain within the safe region at every time step, regardless of randomness. In practice, many SafeRL algorithms enforce such constraints through external mechanisms like safety layers, control barrier functions, or shielding.
\item \textbf{Probability of failure:} Here, for example, the constrain is defined as $\Pr(\text{eventual failure}) \le \delta$. If one defines a cost $c(s,a)$ that is $1$ upon entering a failure state and $0$ otherwise, then $J_c(\pi)$ is ``essentially'' the (discounted) probability of failure. A constraint $J_c(\pi)\le \delta$ limits failure probability. This can be handled in CMDP by that cost formulation \cite{Geibel2005}.

\subsubsection*{Examples of Probability of Failure Constraints}

Probability of failure constraints aim to limit the chance that an agent enters a catastrophic or irrecoverable state throughout its lifetime. As discussed, such constraints can be formalized by defining a cost function $c(s,a)$ which equals $1$ when taking an action $a$ in state $s$ leads to a \emph{failure state} (or belongs to a set of failure states), and $0$ otherwise. The expected cumulative cost $J_c(\pi)$ under this formulation directly corresponds to the probability of failure. Below are typical scenarios where such constraints are relevant.

Spacecraft and Autonomous Vehicles --Safe Landing or Docking Probability \cite{Blackmore2010,Ono2015,Chow2015}: In space missions or autonomous landing scenarios, failure is often defined as crashing during landing or docking. One may enforce a constraint such as $\Pr[\text{crash}] \leq \delta$, where $\delta$ is a small acceptable risk level. The cost function is defined as $c(s,a) = 1$ if $(s,a)$ leads to a crash state.

Specifically, The paper \cite{Blackmore2010} is one of the earliest works in chance-constrained motion planning and is frequently cited in spacecraft and UAV planning and \cite{Ono2015}, is a classic reference on chance-constrained formulations for spacecraft landing and docking. The paper \cite{Ono2015} directly deals with probability of failure constraints for spacecraft control.

Robotics: Falling or Tipping Over \cite{Berkenkamp2017,Wabersich2018,Berkenkamp2015}: In humanoid or legged robots, failure is typically associated with falling down. The agent is required to maintain $\Pr[\text{fall}] \leq \delta$ to ensure physical integrity and task feasibility. In this case, any state classified as ``fallen'' is marked as a failure state, and $c(s,a)=1$ if the next state is a fallen state. Specifically, \cite{Berkenkamp2017} is a classic paper that specifically addresses falling in legged robots and balance maintenance as a safety constraint. They model unsafe states (like falls) and ensure with high probability that they are avoided. The paper \cite{Wabersich2018} introduces a safety certification approach ensuring that robots do not enter dangerous states (including falls). It applies to both wheeled and legged robots. The earlier work \cite{Berkenkamp2015} focuses on safe policy learning for balancing and preventing falls.
It explicitly models unsafe states (falls) in the dynamics and safety set. 

Healthcare: Patient Mortality or Critical Failure \cite{Raghu2017,Gottesman2019,jia2020safe,tu2025offline}: In reinforcement learning for clinical decision-making (e.g., ICU treatment policies), a failure may be defined as the patient's mortality or reaching a critical medical condition. The constraint $\Pr[\text{critical\_failure}] \leq \delta$ limits the treatment policy to maintain acceptable risk levels. Here, failure states correspond to medical emergencies.

In particular, \cite{Raghu2017}, models ICU treatment as an MDP where mortality is treated as an absorbing failure state. While optimizing expected return, they explicitly consider trajectories leading to death. In 2018, \cite{otten2023} did a comprehensive study on use of RL systems for ICU treatment. \cite{Gottesman2019}, proposed a foundational paper outlining safety and interpretability concerns in clinical RL. It explicitly discusses mortality and adverse outcomes as critical failure events. While not formalizing constraints as 
$\text{Pr[failure]} \leq \delta,$ it motivates their necessity. 

Finance: Bankruptcy or Insolvency Events \cite{neto2020,Borkar2014,Chow2015,chow2017risk,SCHLOSSER2020104997}: In financial portfolio management, the failure event could be the agent's wealth dropping below a bankruptcy threshold. The probability of this event is often constrained by $\Pr[\text{bankruptcy}] \leq \delta$ to limit risk exposure. The cost function is $c(s,a) = 1$ if wealth crosses the bankruptcy boundary.

The paper \cite{neto2020} discusses risk-sensitive portfolio optimization with Markov decision processes. It addresses ruin (bankruptcy) probabilities explicitly. In 2014 paper \cite{Borkar2014}, proposes to directly deals with probability of ruin (bankruptcy) in constrained MDPs. It proposes algorithms under constraints like 
$\text{Pr[bankruptcy]} \leq \text{Pr[bankruptcy]} \leq \delta.$ The paper \cite{chow2017risk} models percentile-based risk for financial RL tasks where falling below a wealth threshold triggers bankruptcy. While applied to cloud scheduling, demonstrates the same probability of ruin modeling, similar to financial insolvency constraints.

Manufacturing --Production System Breakdown: In industrial automation, a failure might occur when production machinery exceeds thermal, mechanical, or chemical safety limits leading to breakdown. The probability of such system failure is constrained to be below a pre-specified threshold, e.g., $\Pr[\text{breakdown}] \leq \delta$.

Power Grids --Blackout Events: In power system control, blackouts (large-scale power failures) are often modeled as absorbing failure states. The system may enforce $\Pr[\text{blackout}] \leq \delta$ to reduce the chance of a cascading failure. Failure is usually caused by overloading, component failures, or instability.

In all these scenarios, $J_c(\pi)$ acts as the failure probability and CMDPs provide a natural framework for enforcing such probabilistic constraints.
\item \textbf{Risk measures:} Instead of expectation of cumulative cost, one could constrain a risk measure of the return (or cost). For example, constrain the variance of return below a threshold, or ensure CVaR$_\alpha$(cost) $\le \delta$. Some works incorporate CVaR into RL as a way to ensure low probability of catastrophic outcomes \cite{Chow2015}. These constraints often do not fit the linear structure of CMDPs, but can be tackled with specialized algorithms. 

\subsubsection*{Examples of Risk Measure Constraints}

Risk measure constraints go beyond the expectation of cumulative cost and aim to control higher-order statistics or tail behavior of the cost distribution. These constraints are useful when we are concerned not only with average performance but also with rare but high-impact events. The most common risk measures in SafeRL include variance, Value-at-Risk (VaR), and Conditional Value-at-Risk (CVaR). Below are several examples from real-world applications.

Autonomous Driving --Variance-Constrained Driving Comfort: While avoiding collisions is a safety constraint, maintaining comfortable driving also involves controlling the variance of acceleration, jerk, or lane deviations. A variance constraint of the form $\text{Var}\left[\sum_t c(s_t,a_t)\right] \leq \delta$ can ensure that passenger discomfort due to aggressive or unstable maneuvers remains limited, reducing the risk of loss of control or accidents.

Specifically, in 2012, \cite{Tamar2012}, introduced variance-constrained reinforcement learning where variance of return is explicitly controlled. It is applicable to driving scenarios when controlling variance of acceleration or jerk. \cite{huang2026} explicitly focuses on reducing control variability to improve smoothness and driving comfort. In 2021 \cite{Kiran2021} wrote a comprehensive survey that discusses driving comfort (acceleration, jerk minimization, smoothness) as key secondary objective in autonomous driving, and mentions various papers that incorporates constraints. 

Finance: CVaR-Constrained Portfolio Optimization: In financial portfolio management, it is common to limit the Conditional Value-at-Risk (CVaR) of the portfolio's return. A CVaR constraint $\text{CVaR}_\alpha[\text{loss}] \leq \delta$ ensures that the expected loss in the worst $\alpha\%$ of cases does not exceed a tolerable threshold. This is widely used to manage downside risk beyond what variance alone captures.

A seminal paper on optimization of conditional Value-at-Risk was by \cite{Rockafellar2000} for portfolio problems. In 2014, \cite{Prashanth2014} specifically focuses on CVaR-constrained MDPs with application to financial risk. This is the go-to reference in both optimization and financial risk management. In 2015, \cite{Chow2015}, introduces CVaR-constrained RL applicable to portfolio optimization and other decision-making tasks. It provides methods to enforce CVaR constraints in sequential decision-making.
 In the same year \cite{Tamar2015}, introduces policy gradient methods for risk-sensitive criteria including CVaR. It directly applies to portfolio optimization under CVaR constraints.

\begin{figure}[t!]
\centering
\begin{tikzpicture}[
  every node/.style={font=\small},
  box/.style={rectangle, draw=#1!70!black, fill=#1!8, rounded corners=3pt,
    minimum height=0.85cm, align=center, line width=0.7pt, font=\scriptsize},
  box/.default=blue,
  arr/.style={-{Stealth[length=2mm]}, line width=0.7pt, #1},
  arr/.default={gray!60!black},
]

% Market environment
\node[box=teal, minimum width=2.4cm] (market) at (0, 0) {\textbf{Market}\\$s_t$: prices, indicators};

% RL Portfolio Agent
\node[box=blue, minimum width=2.6cm] (agent) at (0, 2.5) {\textbf{RL Portfolio Agent}\\$\pi(a_t|s_t)$};

% Allocation action
\node[box=purple, minimum width=2.4cm] (alloc) at (4.8, 2.5) {\textbf{Allocation}\\$a_t$: portfolio weights\\$w_1, w_2, \ldots, w_n$};

% CVaR constraint
\node[box=red, minimum width=2.6cm] (cvar) at (4.8, 0) {\textbf{Risk Constraint}\\$\text{CVaR}_\alpha[\text{loss}] \leq \delta$};

% Returns
\node[box=green!50!black, minimum width=2.2cm] (ret) at (2.4, -2.0) {\textbf{Portfolio Returns}\\$r_t = \sum_i w_i \cdot R_i$};

% Loss distribution sketch (small)
\begin{scope}[shift={(8.0, 0.5)}, scale=0.6]
  % Axes
  \draw[-{Stealth[length=1.5mm]}, gray!60!black, line width=0.5pt] (-2.5,0) -- (2.5,0)
    node[right, font=\tiny] {Loss};
  \draw[-{Stealth[length=1.5mm]}, gray!60!black, line width=0.5pt] (-2.5,0) -- (-2.5,2.5)
    node[above, font=\tiny] {Prob};
  % Bell curve (skewed)
  \draw[blue!60!black, line width=0.8pt, smooth]
    plot[domain=-2.2:2.2, samples=40] (\x, {2.0*exp(-(\x+0.3)*(\x+0.3)/0.8)});
  % CVaR region (tail)
  \fill[red!20] plot[domain=0.8:2.2, samples=20] (\x, {2.0*exp(-(\x+0.3)*(\x+0.3)/0.8)})
    -- (2.2,0) -- (0.8,0) -- cycle;
  % VaR line
  \draw[red!60!black, line width=0.7pt, dashed] (0.8,0) -- (0.8,1.0);
  \node[font=\tiny, text=red!60!black] at (0.4, -0.5) {VaR$_\alpha$};
  \node[font=\tiny, text=red!60!black] at (2.0, -0.5) {CVaR$_\alpha$};
  % Tail risk arrow (bigger, label near arrowhead)
  \draw[-{Stealth[length=2.5mm]}, red!60!black, line width=1.0pt] (1.5, 2.0) -- (1.5, 0.15);
  \node[font=\tiny\bfseries, text=red!60!black, right] at (1.6, 1.8) {Tail risk};
\end{scope}

% Arrows
\draw[arr=blue!60!black] (market.north) -- node[left, font=\tiny, text=teal!60!black] {observe} (agent.south);
\draw[arr=blue!60!black] (agent.east) -- node[above, font=\tiny] {allocate} (alloc.west);
\draw[arr=red!60!black] (alloc.south) -- node[right, font=\tiny, text=red!60!black] {risk check} (cvar.north);
% Execute trade: from Risk Constraint down to Portfolio Returns top-right
\draw[arr=green!50!black] (cvar.south) -- ++(0,-0.5) -| node[pos=0.25, right, font=\tiny, align=left] {execute\\trade} ([xshift=0.5cm]ret.north);

% Feedback loop: s_t+1 from Portfolio Returns top-left, bend left to Market bottom
\draw[arr=teal!60!black] ([xshift=-0.5cm]ret.north) -- ++(0,0.3) -| node[pos=0.75, left, font=\tiny] {$s_{t+1}$} (market.south);

% Annotations
\node[font=\tiny, text=green!50!black, align=center] at (-2.8, 2.5) {\textbf{Reward:}\\maximize\\expected return};
\node[font=\tiny, text=red!50!black, align=center] at (-2.8, 1.2) {\textbf{Cost:}\\tail losses\\$\Pr[\text{ruin}] \leq \delta$};

\end{tikzpicture}
\caption{SafeRL for risk-constrained portfolio management. The RL agent observes market state and outputs portfolio allocations. A CVaR constraint $\text{CVaR}_\alpha[\text{loss}] \leq \delta$ limits tail risk, ensuring the expected loss in the worst $\alpha\%$ of scenarios stays bounded. The inset shows the loss distribution with the CVaR tail region highlighted.}
\label{fig:app_finance}
\end{figure}

Healthcare: CVaR for Adverse Outcomes: In healthcare applications such as treatment planning or resource allocation, minimizing the expected number of adverse events may not be sufficient. A CVaR constraint on cumulative adverse events or side effects ensures that treatment policies control the likelihood of rare but severe negative outcomes.

Although general, the paper \cite{Prashanth2016} is frequently cited in healthcare RL as it provides risk-sensitive methods including CVaR for controlling adverse events. A position paper \cite{Gottesman2019} emphasizes that minimizing expected adverse outcomes is insufficient and recommended the use of risk measures (e.g., CVaR). Although it doesn't present an algorithm, it motivates CVaR as a necessary tool in treatment planning.

To summarize, CVaR is used in healthcare RL to limit the risk of rare but severe adverse events, to model tail risk (e.g., mortality, critical organ failure, side effects), and to design safe treatment policies under uncertainty.

Supply Chain Management --Risk-Averse Inventory Control: In supply chains, stockout events (inventory drops below zero) cause disruptions. Instead of just minimizing expected stockouts, a CVaR constraint on stockout penalties ensures that even in rare demand spikes, the risk of large cumulative stockouts is controlled.

One of the foundational works on risk-averse inventory control, discusses CVaR and other risk measures \cite{Chen2014}. It models stockouts as undesirable events and controls the risk via dynamic programming. Widely used as a textbook, \cite{Shapiro2014} includes detailed treatment of CVaR in supply chain optimization. It explains how risk measures such as CVaR can control stockouts and demand uncertainties. While not supply chain specific, \cite{Chow2015} is often cited in supply chain literature for inventory control under CVaR constraints. It's techniques directly apply when modeling stockouts as risky events. A highly cited paper is \cite{Bertsimas2006} showing how robust optimization (a precursor to CVaR-type models) controls stockout risks. Provides insight into handling demand uncertainty and stockout penalties.

Robotics --CVaR-Constrained Trajectory Optimization: For autonomous robots navigating uncertain environments, one may use CVaR constraints on cumulative collision risk or energy consumption. This ensures that the robot does not just minimize average risk but is also robust against worst-case environmental uncertainties or adversarial perturbations.

The paper \cite{Ahmadi2022} directly studies CVaR-constrained trajectory optimization for robots under uncertainty (see also \cite{Hakabyan2019,ahmadi2020risk,Bian2023} for CVaR-based motion planning and path planning). Formulates trajectory optimization problems where collision risk is controlled using CVaR. The paper \cite{Zhang2020} focuses on risk-averse path planning under environmental uncertainty using CVaR. Provides algorithms and examples for safe robot navigation with collision risk control. 

Power Systems --Risk-Sensitive Stability Control: In power grid operations, rather than just minimizing expected frequency deviations or power outages, operators may use CVaR or variance constraints to ensure that the probability of large-scale instabilities remains acceptably low, accounting for rare but impactful demand or supply fluctuations.

The paper \cite{Bitar2012} addresses reliability and demand uncertainties in power systems with a risk-sensitive approach. It Models constraints on load shedding and supply-demand balancing. The paper \cite{DallAnese2015} directly introduces chance-constrained optimization for voltage stability and power flow \cite{zhang2011chance}, which is equivalent to controlling the probability of instability; CVaR and probability bounds are discussed. In 2012, \cite{roald2014} introduces risk-constrained OPF formulations using CVaR and chance constraints. It ensures that the probability of voltage violations and instabilities is below a prescribed risk level. A year earlier in 2011, \cite{Wang2025ProbabilityLF} models the variance and tail risk of power system instability due to fluctuating wind generation.
While {\bf not} CVaR directly, it motivates variance and higher-order moment-based risk constraints.

Power systems use variance, probabilistic, and CVaR constraints to: Avoid rare but catastrophic blackouts, to maintain voltage and frequency within safe margins, and to ensure reliability under demand and renewable generation uncertainty.

Risk measures provide a flexible modeling tool for specifying safety, robustness, and fairness. However, incorporating them often requires non-standard methods such as CVaR-optimized policy gradients, distributional reinforcement learning, or scenario-based optimization, as these constraints typically violate the linear structure required for classic CMDP formulations.
\item \textbf{Multi-objective viewpoint:} SafeRL can be seen as a multi-objective optimization where one objective is reward and others are (negative) costs \cite{Horie2019,gu2025safe}. The constraint formulation picks one point on the Pareto frontier by treating costs as hard constraints. Alternatively, one could combine reward and costs into a single scalar reward via weighted sum (penalty method), but that requires tuning weights and does not guarantee constraint satisfaction \cite{Achiam2017}. Constrained formulation cleanly separates objectives and safety. 

\subsubsection*{Examples of Multi-Objective Viewpoint in Safe Reinforcement Learning}

In many real-world applications, agents must simultaneously optimize multiple objectives that may conflict. Typically, SafeRL is modeled as a multi-objective problem where one objective is the primary reward, while others are safety-related costs. The CMDP formulation addresses this by enforcing costs as hard constraints, selecting a specific point on the Pareto frontier. Alternatively, some works use a scalarization (penalty) method by combining reward and costs into a single objective. Below are common examples illustrating the multi-objective viewpoint.

Robotics --Speed vs. Safety Trade-off \cite{Achiam2017,Berkenkamp2017,Chow2018}: A mobile robot navigating in an environment may aim to maximize the reward associated with reaching the goal quickly. However, it also needs to minimize the probability of collisions and energy consumption. Here, speed contributes positively to the reward, while collisions and energy usage are treated as negative costs. The CMDP formulation could enforce a maximum acceptable collision rate and energy budget, leading to an explicit safety-performance trade-off.

Autonomous Driving --Travel Time vs. Accident Risk \cite{Dalal2018,Shalev2017,Zheng2024,nguyen2023safe}: In autonomous driving, agents aim to minimize the expected travel time while simultaneously ensuring a low probability of accidents. The agent faces a trade-off between driving faster (leading to higher reward) and maintaining safe distances or reduced speeds to avoid collisions (cost). The Pareto frontier consists of policies ranging from conservative (low accident risk, long travel time) to aggressive (low travel time, high accident risk). SafeRL selects a policy on this frontier according to the safety constraint.

Energy Systems --Power Supply vs. Cost and Reliability \cite{Bitar2012,DallAnese2015}: In power grid management, the agent may aim to optimize electricity production to meet demand (reward) while minimizing costs associated with fuel consumption and the risk of violating reliability standards (costs). This problem naturally involves multiple objectives: maximizing supply quality and minimizing operational risks and costs.

Healthcare: Treatment Success vs. Adverse Effects \cite{Gottesman2019,Raghu2017}: In medical decision-making, an RL agent may need to maximize treatment efficacy while minimizing adverse effects or treatment toxicity. For example, maximizing patient recovery speed could conflict with the need to limit drug dosage to avoid harmful side effects. A CMDP constraint could limit the expected cumulative adverse effects to a tolerable threshold, enforcing safety.

Manufacturing: Production Efficiency vs. Maintenance Costs \cite{Chung2020,Siraskar2023,CHEN2025111018}: In automated manufacturing, increasing production speed or output (reward) may result in higher machine wear and maintenance costs (costs). A CMDP-based SafeRL framework may impose a constraint on expected maintenance cost or machine degradation, forcing the agent to balance throughput and longevity.

Drone Swarms: Task Completion vs. Communication Load \cite{Gu2023MultiRobot}: In multi-drone systems, agents may wish to maximize task completion rates (reward) while minimizing communication overhead (cost). Communication constraints can act as safety constraints in environments with bandwidth limitations or interference risks.

In all these cases, treating costs as hard constraints via CMDPs gives a systematic way to trade off reward and cost by directly selecting a feasible point on the Pareto frontier. In contrast, using a scalarization approach (reward minus weighted costs) can lead to policies that violate constraints unless the weights are carefully chosen and tuned.

\item \textbf{Temporal logic specifications \cite{alshiekh2018safe,ElSayed2021,Turchetta2016,Wabersich2018}:} In some safety-critical settings, the safety requirement is given as a formal temporal logic formula (e.g., “always avoid region X unless Y happens”). Such logic specifications can be converted to automata and then to reward/cost functions or shields that enforce them \cite{alshiekh2018safe,ElSayed2021}. While not a traditional CMDP constraint, they can often be incorporated by extending the state space to include automaton states representing the satisfaction of the formula. Specifically, in \cite{alshiekh2018safe}, Alshiekh et al. (2018)
introduced safe reinforcement learning via shielding; they used LTL (Linear Temporal Logic) specifications to construct shields for RL agents. In Wabersich and Zeilinger (2018) \cite{Wabersich2018}, linear model predictive safety certification for learning-based control was employed. Although it focused on model predictive safety, their framework is capable of incorporating logic-based safety constraints. The paper \cite{ElSayed2021} extends shield-based safe RL to the multi-agent setting using temporal logic specifications. The paper \cite{Turchetta2016} while focused on safe exploration, their work shows how formal safety specifications can be integrated into exploration guarantees. One of the older but influential paper \cite{Sadigh2016} shows how specifications from temporal logic can shape safe planning.

\end{enumerate}

\begin{figure}[t!]
\centering
\begin{tikzpicture}[
  every node/.style={font=\small},
  hub/.style={rectangle, draw=blue!70!black, fill=blue!15, rounded corners=5pt,
    minimum height=1.2cm, minimum width=2.8cm, align=center, line width=1pt,
    font=\small\bfseries},
  app/.style={rectangle, draw=#1!70!black, fill=#1!8, rounded corners=3pt,
    minimum height=0.85cm, minimum width=2.5cm, align=center, line width=0.7pt,
    font=\scriptsize},
  app/.default=gray,
  ctype/.style={font=\tiny, text=#1!60!black, align=center},
  ctype/.default=gray,
  arr/.style={-{Stealth[length=2mm]}, line width=0.7pt, #1!50!black},
  arr/.default=gray,
]

% Central hub
\node[hub] (hub) at (0,0) {Safe RL\\(CMDP)};

% Application domains arranged around the hub
% Top row
\node[app=red] (drive) at (-3.5, 2.5) {\textbf{Autonomous}\\\textbf{Driving}};
\node[app=teal] (health) at (0, 3.0) {\textbf{Healthcare}};
\node[app=orange] (finance) at (3.5, 2.5) {\textbf{Finance}};

% Bottom row
\node[app=purple] (robot) at (-3.5, -2.5) {\textbf{Robotics}};
\node[app=green!50!black] (power) at (0, -3.0) {\textbf{Power}\\\textbf{Systems}};
\node[app=brown] (mfg) at (3.5, -2.5) {\textbf{Manufacturing}};

% Arrows from hub to domains
\draw[arr=red] (hub.north west) -- (drive.south east);
\draw[arr=teal] (hub.north) -- (health.south);
\draw[arr=orange] (hub.north east) -- (finance.south west);
\draw[arr=purple] (hub.south west) -- (robot.north east);
\draw[arr=green!50!black] (hub.south) -- (power.north);
\draw[arr=brown] (hub.south east) -- (mfg.north west);

% Constraint type annotations directly above/below respective boxes
\node[ctype=red, above=0.1cm of drive] {$\text{dist}(s_t) \geq d_{\text{safe}}$};
\node[ctype=teal, above=0.1cm of health] {$\text{dose}_t \leq d_{\max}$};
\node[ctype=orange, above=0.1cm of finance] {$\text{CVaR}_\alpha \leq \delta$};
\node[ctype=purple, below=0.1cm of robot] {$\|\tau_t\| \leq \tau_{\max}$};
\node[ctype=green!50!black, below=0.1cm of power] {$V_t \in [V_{\min}, V_{\max}]$};
\node[ctype=brown, below=0.1cm of mfg] {$\Pr[\text{fail}] \leq \delta$};

\end{tikzpicture}
\caption{Application domains of Safe Reinforcement Learning. Each domain connects to the CMDP framework through domain-specific safety constraints: collision avoidance in autonomous driving, dosage limits in healthcare, risk measures (CVaR) in finance, actuator limits in robotics, operational bounds in power systems, and failure probability in manufacturing.}
\label{fig:app_domains}
\end{figure}

Throughout this survey, we largely assume the standard expected cumulative cost constraints unless stated otherwise. This assumption covers many practical cases (like average constraint violation rate, or total resource consumption) and has well-developed theoretical tools. When discussing specific algorithms, we will note what type of constraint they handle (most often, it is expected cost).

\begin{figure}[t!]
\centering
\begin{tikzpicture}[
  every node/.style={font=\scriptsize},
  catbox/.style={rectangle, draw=#1!70!black, fill=#1!10, rounded corners=3pt,
    minimum height=0.9cm, minimum width=2.4cm, align=center, line width=0.7pt},
  catbox/.default=blue,
  topbox/.style={rectangle, draw=blue!70!black, fill=blue!15, rounded corners=4pt,
    minimum height=0.9cm, minimum width=4.0cm, align=center, line width=0.9pt,
    font=\small\bfseries},
  descbox/.style={font=\tiny, text=gray!70!black, align=center, text width=2.4cm},
  arr/.style={-{Stealth[length=2mm]}, line width=0.6pt, gray!60!black},
]

% Top box
\node[topbox] (top) at (0,0) {Safety Constraint Types in SafeRL};

% Five categories on one row, evenly spaced
\node[catbox=orange] (inst)  at (-5.6,-1.8) {\textbf{Instantaneous}\\$c(s_t,a_t) \leq d,\;\forall t$};
\node[catbox=teal]   (cumul) at (-2.8,-1.8) {\textbf{Expected}\\  \textbf{Cumulative}\\$J_c(\pi) \leq d$};
\node[catbox=red]    (prob)  at (0,-1.8)    {\textbf{Probability of}\\  \textbf{Failure}\\$\Pr[\text{fail}] \leq \delta$};
\node[catbox=purple] (risk)  at (2.8,-1.8)  {\textbf{Risk Measures}\\CVaR, Variance};
\node[catbox=cyan]   (ltl)   at (5.6,-1.8)  {\textbf{Temporal Logic}\\LTL specifications};

% Arrows — all fan out from top.south via a shared horizontal level
\draw[arr] (top.south) -- ++(0,-0.25) -| (inst.north);
\draw[arr] (top.south) -- ++(0,-0.25) -| (cumul.north);
\draw[arr] (top.south) -- ++(0,-0.25) -| (prob.north);
\draw[arr] (top.south) -- ++(0,-0.25) -| (risk.north);
\draw[arr] (top.south) -- ++(0,-0.25) -| (ltl.north);

% Descriptions below each
\node[descbox, below=0.15cm of inst]  {Hard per-step limits.\\Torque, collision avoid.};
\node[descbox, below=0.15cm of cumul] {Long-run average cost.\\Most common in CMDPs.};
\node[descbox, below=0.15cm of prob]  {Bound on catastrophic\\event probability.};
\node[descbox, below=0.15cm of risk]  {Tail risk via CVaR$_\alpha$\\or variance bounds.};
\node[descbox, below=0.15cm of ltl]   {Formal specs via\\automata and shields.};

% Strictness gradient — placed well below all boxes and descriptions
\draw[{Stealth[length=2mm]}-{Stealth[length=2mm]}, line width=0.5pt, orange!60!black, dashed]
  (-6.2,-4.0) -- node[below, font=\scriptsize\itshape] {Strictest $\longleftrightarrow$ Most flexible} (6.2,-4.0);

\end{tikzpicture}
\caption{Taxonomy of safety constraint types in SafeRL. Instantaneous constraints are the strictest (must hold at every time step), while expected cumulative constraints (the standard CMDP formulation) are the most common. Probability of failure, risk measures, and temporal logic specifications offer alternative ways to encode safety requirements.}
\label{fig:constraint_types}
\end{figure}

\subsection{Theoretical Results} We highlight a few key theoretical results for CMDPs relevant to SafeRL: 

\begin{theorem}[{\bf Optimal Policy for CMDP \cite{Altman1999}}] 
For a finite CMDP with bounded rewards and costs, there exists an optimal policy $(\pi^*, \lambda^*)$ that attains the maximum in \eqref{eq:cmdp_objective} (and corresponding optimal dual variables). Moreover, there exists an optimal policy that is stationary (time-independent) and can be chosen to be deterministic with respect to actions at all but possibly a measure-zero set of states. In practice, optimal policies may randomize between a small number of deterministic policies if needed to exactly satisfy constraints. 
\end{theorem}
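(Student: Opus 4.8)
The plan is to reduce the CMDP to the occupancy-measure linear program already derived in the excerpt, and then to read off all three claims (existence, stationarity, limited randomization) from the geometry of that LP together with LP strong duality. First I would recall the correspondence between stationary policies and occupancy measures: the map sending a stationary policy $\pi$ to $x_\pi(s,a) = (1-\gamma)\sum_{t}\gamma^t \Pr(s_t=s, a_t=a \mid \pi)$ is a bijection between stationary policies and the set $\mathcal{X}$ of nonnegative vectors satisfying the flow-conservation equations $\sum_a x(s,a) = (1-\gamma)\rho(s) + \gamma\sum_{s',a'}P(s\mid s',a')x(s',a')$, with inverse $\pi(a\mid s)=x(s,a)/\sum_{a'}x(s,a')$ on states of positive visitation. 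Under this correspondence the reward and each cost return become the linear functionals $\sum_{s,a}x(s,a)r(s,a)$ and $\sum_{s,a}x(s,a)c^{(i)}(s,a)$, so problem \eqref{eq:cmdp_objective} is exactly the LP of maximizing a linear objective over $\mathcal{X}$ intersected with the $m$ halfspaces $\sum_{s,a}x(s,a)c^{(i)}(s,a)\le d_i$.

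Second, I would establish existence. Summing the flow-conservation equalities over all $s$ and using $\sum_s\rho(s)=1$ and $\sum_s P(s\mid s',a')=1$ forces $\sum_{s,a}x(s,a)=1$, so $\mathcal{X}$ is bounded; together with the closed cost constraints this makes the feasible region a compact polytope, nonempty by the standing assumption $\Pi_{\text{safe}}\neq\emptyset$. A linear, hence continuous, objective over a nonempty compact set attains its maximum, yielding an optimal occupancy measure $x^*$ and, via the inverse map, an optimal stationary policy $\pi^*$; this already gives both existence and stationarity. For the dual variables I would invoke LP strong duality: the LP dual assigns multipliers to the constraints, and the multipliers $\lambda^*\ge 0$ attached to the $m$ cost inequalities are precisely the optimal Lagrange multipliers, so $(\pi^*,\lambda^*)$ is a saddle point of the Lagrangian $\mathcal{L}(\pi,\lambda)$ defined earlier in this section.

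Third, for the limited-randomization claim (which in the finite setting is the precise meaning of the ``all but a negligible set of states'' phrasing, with the exceptional set of size at most $m$) I would pass to an extreme point of the feasible polytope, which exists since a linear objective over a bounded nonempty polytope is maximized at a vertex. At a basic feasible solution the number of strictly positive coordinates $x^*(s,a)$ is bounded by the number of linearly independent active constraints: the $|S|$ flow-conservation equalities together with at most $m$ active cost inequalities, so at most $|S|+m$ positive entries. Since every state reachable under $\rho$ contributes at least one positive entry, the remaining budget of $m$ leaves at most $m$ states at which two or more actions receive positive probability; at all other states the recovered policy $\pi^*$ is deterministic.

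I expect the main obstacle to be making the counting in this last step airtight. One must handle degenerate vertices, states that are unreachable under $\rho$ (and hence impose no constraint on $\pi^*$), and the exact linear independence of the active flow-conservation rows, and then convert a bound on the number of positive entries $x^*(s,a)$ into a bound on the number of \emph{randomizing} states. The existence and duality portions are comparatively routine once the occupancy-measure LP and the policy--occupancy bijection are in place.
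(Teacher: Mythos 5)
Your proposal cannot be compared against a proof in the paper because the paper gives none: this theorem is stated as imported background, with only the citation to Altman (1999) and a one-sentence gloss that memoryless policies suffice. That said, your route is exactly the one the paper's own machinery anticipates (the occupancy-measure LP of Section~3.2) and is essentially Altman's argument, so it is the right reconstruction. The existence and stationarity steps are sound: the flow equations force $\sum_{s,a} x(s,a) = 1$, so the feasible set is a nonempty compact polytope and the linear objective attains its maximum; LP duality then supplies $\lambda^* \ge 0$ on the cost rows (one small gap: concluding that $(\pi^*,\lambda^*)$ is a saddle point of $\mathcal{L}$ requires a short partial-dualization argument, since the LP dual also carries multipliers on the flow rows). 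The counting step is correct in spirit and your self-diagnosis of the obstacle is accurate; the standard repair is to restrict the LP to the states of positive visitation $S^+ = \{s : \sum_a x^*(s,a) > 0\}$. For $s \notin S^+$ the flow equation reads $0 = (1-\gamma)\rho(s) + \gamma\sum_{s',a'}P(s\mid s',a')x^*(s',a')$, which together with nonnegativity just forces certain variables to vanish, so it can be absorbed into variable deletions rather than counted as an independent active row; on the reduced LP the $|S^+|$ flow rows are linearly independent (the matrix $I - \gamma P_\pi^{\top}$ is nonsingular for $\gamma < 1$), and the vertex count gives at most $|S^+| + m$ positive entries, hence at most $m$ randomizing states among the reachable ones, with the policy chosen deterministic (arbitrarily) on unreachable states. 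This also makes precise the theorem's loose ``measure-zero'' phrasing, which in the finite setting should indeed be read as ``at most $m$ states.''
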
 

In short, one does not need complex history-dependent or non-Markovian policies to solve CMDPs optimally; memoryless policies suffice, simplifying the search space for algorithms.

\begin{theorem}[{\bf Lagrange Duality \cite{Altman1999}}] 
Under mild regularity conditions (e.g., finite state/action or convexity in policy space), The strong duality holds for the CMDP problem. That is,
\[
\min_{\lambda \geq 0} \max_{\pi} \mathcal{L}(\pi, \lambda) = \max_{\pi} \min_{\lambda \geq 0} L(\pi, \lambda),
\]
and solving the dual yields the primal optimum. The optimal dual variables $\lambda^*$ provide valuable information: if $\lambda_i^* > 0$, then the $i$-th constraint is active (tight) at the optimum; if $\lambda_i^* = 0$, the optimum policy naturally satisfies $i$-th constraint with some slack. 
\end{theorem}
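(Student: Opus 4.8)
The plan is to prove strong duality by exploiting the linear-programming reformulation of the CMDP that was already established earlier in the excerpt. The key observation is that, written in terms of occupancy measures $x(s,a)$, the CMDP objective $\sum_{s,a} x(s,a) r(s,a)$, the cost constraints $\sum_{s,a} x(s,a) c^{(i)}(s,a) \le d_i$, and the Bellman flow-conservation constraints are all \emph{linear} in $x$. Thus the entire CMDP is an instance of a finite-dimensional linear program over the polytope of valid occupancy measures. The whole difficulty of strong duality for a general nonconvex policy-space problem evaporates once we pass to this convex (in fact linear) geometry, so the first step I would take is to re-express the primal $\max_\pi J(\pi)$ subject to $J_{c^{(i)}}(\pi) \le d_i$ as this LP in $x$, recording that the feasible set is a nonempty (by the standing assumption $\Pi_{\text{safe}} \ne \emptyset$) compact polyhedron.

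Next I would invoke the fact that for linear programs, strong duality holds automatically whenever the primal is feasible and bounded. Boundedness follows because rewards and costs are bounded and the occupancy measure has total mass $\sum_{s,a} x(s,a) = 1$ (from summing the flow constraints with the $(1-\gamma)$ normalization), so $J(\pi)$ is bounded above by $\|r\|_\infty/(1-\gamma)$. Feasibility follows from the non-emptiness assumption on $\Pi_{\text{safe}}$. Therefore the LP dual has the same optimal value as the primal, with no duality gap, and the dual multipliers associated with the cost constraints are exactly the Lagrange multipliers $\lambda_i \ge 0$ from the Lagrangian $\mathcal{L}(\pi,\lambda)$. The key step here is to verify that the Lagrangian $\mathcal{L}(\pi,\lambda)$ defined in the excerpt matches the partial Lagrangian of the LP obtained by dualizing only the $m$ cost constraints (leaving the flow constraints to define the inner $\max_\pi$); once this identification is made, LP strong duality directly gives $\min_{\lambda \ge 0} \max_\pi \mathcal{L}(\pi,\lambda) = \max_\pi \min_{\lambda \ge 0} \mathcal{L}(\pi,\lambda)$.

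An alternative, more self-contained route avoids the LP machinery and instead applies a general minimax/saddle-point theorem (Sion's minimax theorem). For this I would note that $\mathcal{L}(\pi,\lambda) = J(\pi) + \sum_i \lambda_i(d_i - J_{c^{(i)}}(\pi))$ is, when the policy is parametrized by its occupancy measure $x$, \emph{linear} (hence concave) in $x$ and \emph{affine} (hence convex) in $\lambda$; the domain of $x$ is a convex compact set and the domain $\lambda \in \mathbb{R}^m_{\ge 0}$ is convex. Sion's theorem then permits interchanging $\max$ and $\min$, yielding the stated equality. I would also remark on the complementary-slackness consequences: at a saddle point $(\pi^*,\lambda^*)$, $\lambda_i^*(d_i - J_{c^{(i)}}(\pi^*)) = 0$, so $\lambda_i^* > 0$ forces the $i$-th constraint to be active and $\lambda_i^* = 0$ leaves slack, which is precisely the interpretation asserted in the theorem statement.

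The main obstacle, and the part requiring care rather than routine calculation, is the reduction from policies to occupancy measures: one must confirm that the map $\pi \mapsto x$ and its inverse $x \mapsto \pi$ (recovering $\pi(a|s) = x(s,a)/\sum_{a'} x(s,a')$ on the support) constitute a genuine bijection between stationary policies and the flow-feasible polytope, so that optimizing over $\pi$ and over $x$ are equivalent and the convexity/linearity structure is legitimate. This is exactly where the finite-state hypothesis and the discounting $\gamma < 1$ enter, guaranteeing well-defined, finite occupancy measures and a bounded feasible region; I would flag that in continuous or undiscounted settings this correspondence and the attendant strong-duality claim require additional regularity (the ``mild regularity conditions'' hedge in the statement), but under the finite CMDP assumptions inherited from the preceding theorem the argument is clean.
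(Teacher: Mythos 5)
The paper itself gives no proof of this theorem (it is stated with a citation to \cite{Altman1999}), and your argument is correct and is essentially the proof the paper points to: you reduce the CMDP to the occupancy-measure linear program that the paper sets up in the preceding subsection, identify the policy-space Lagrangian with the partial Lagrangian of that LP, and invoke LP strong duality (or, equivalently, Sion's minimax theorem on the compact occupancy polytope), which is precisely Altman's route, including the careful point about the policy--occupancy correspondence. The only cosmetic imprecision is asserting that complementary slackness makes $\lambda_i^* = 0$ imply strict slack --- it only gives the converse direction ($\lambda_i^* > 0$ implies the constraint is tight, with degenerate cases where both $\lambda_i^* = 0$ and the constraint is tight) --- but the theorem statement itself is equally loose on this point, so nothing needs repair.
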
 

This theorem justifies many SafeRL approaches that focus on solving the dual via gradient methods on $\lambda$ while finding optimal policies for a given $\lambda$ using RL.

\begin{proposition}[{\bf Policy Gradient for Constrained Objectives}] If the policy $\pi_\theta$ is parameterized by $\theta$ (e.g., a neural network), one can derive gradients for the constrained problem. For instance, using the Lagrangian, the gradient of $\mathcal{L}(\pi_\theta,\lambda)$ with respect to $\theta$ is
\[
\nabla_{\theta} \mathcal{L} = \nabla_{\theta} J(\pi_{\theta}) - \sum_{i} \lambda_i \nabla_{\theta} J_c^{(i)}(\pi_{\theta}).
\]
This leads to constrained policy gradient algorithms, where $\theta$ is updated in the direction of $\nabla_\theta \mathcal{L}$ and $\lambda$ is updated in the direction of $\nabla_\lambda \mathcal{L} = d_i - J_{c^{(i)}}(\pi_\theta)$. Many actor-critic style SafeRL methods employ this simultaneous gradient update (a form of primal-dual gradient descent) \cite{Chow2018}. 
\end{proposition}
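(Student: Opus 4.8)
The plan is to separate the claim into an algebraic part and an analytic part: the decomposition of $\nabla_\theta\mathcal{L}$ and the form of $\nabla_\lambda\mathcal{L}$ follow from linearity of differentiation, while the genuine content is to render $\nabla_\theta J$ and $\nabla_\theta J_{c^{(i)}}$ computable by sampling. First I would start from the Lagrangian $\mathcal{L}(\pi_\theta,\lambda) = J(\pi_\theta) + \sum_{i=1}^m \lambda_i\bigl(d_i - J_{c^{(i)}}(\pi_\theta)\bigr)$ and apply $\nabla_\theta$. Since the multipliers $\lambda_i$ and thresholds $d_i$ are constants in $\theta$, the terms $\lambda_i d_i$ vanish under differentiation and each $-\lambda_i J_{c^{(i)}}(\pi_\theta)$ contributes $-\lambda_i\nabla_\theta J_{c^{(i)}}(\pi_\theta)$, giving $\nabla_\theta\mathcal{L} = \nabla_\theta J(\pi_\theta) - \sum_i \lambda_i \nabla_\theta J_{c^{(i)}}(\pi_\theta)$. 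Because $\mathcal{L}$ is affine in each $\lambda_i$ with slope $d_i - J_{c^{(i)}}(\pi_\theta)$, the partial derivative $\partial\mathcal{L}/\partial\lambda_i = d_i - J_{c^{(i)}}(\pi_\theta)$ is immediate, and stacking these components yields the stated $\nabla_\lambda\mathcal{L}$.

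The substantive step is to evaluate $\nabla_\theta J$ and $\nabla_\theta J_{c^{(i)}}$ in a sampleable form. Here I would invoke the policy gradient (log-derivative) theorem: writing the return as an expectation over trajectories $\tau$ with density $p_\theta(\tau) = \rho(s_0)\prod_t \pi_\theta(a_t|s_t)P(s_{t+1}|s_t,a_t)$, the identity $\nabla_\theta p_\theta(\tau) = p_\theta(\tau)\,\nabla_\theta\log p_\theta(\tau)$ together with the fact that only the policy factors depend on $\theta$ gives $\nabla_\theta\log p_\theta(\tau) = \sum_t \nabla_\theta\log\pi_\theta(a_t|s_t)$. Applying this to the reward functional yields $\nabla_\theta J(\pi_\theta) = \mathbb{E}_{\pi_\theta}\bigl[\sum_t \gamma^t\,\nabla_\theta\log\pi_\theta(a_t|s_t)\,Q^{\pi_\theta}_r(s_t,a_t)\bigr]$, and repeating the derivation with $c^{(i)}$ in place of $r$ gives the cost gradients with $Q^{\pi_\theta}_{c^{(i)}}$ in place of $Q^{\pi_\theta}_r$. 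Substituting these into the decomposition produces a single score-function estimator for $\nabla_\theta\mathcal{L}$, which is precisely the quantity a primal-dual actor-critic ascends.

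The main obstacle is analytic, not algebraic: justifying the interchange of $\nabla_\theta$ with the expectation and with the infinite discounted sum. The plan is to exploit $\gamma<1$ and the boundedness of $r$ and $c^{(i)}$ assumed in the CMDP setup, under which each discounted return converges absolutely and uniformly in $\theta$; then, assuming $\theta\mapsto\pi_\theta(a|s)$ is continuously differentiable with score $\nabla_\theta\log\pi_\theta(a|s)$ dominated by an integrable envelope, the dominated convergence theorem legitimizes differentiating term by term inside the integral. One must also require $\pi_\theta(a|s)>0$ on the support so that the log-derivative trick is well posed. Finally, I would note that strong duality, established in the Lagrange Duality theorem above, guarantees that the saddle point targeted by the paired primal ascent $\theta\leftarrow\theta+\beta\nabla_\theta\mathcal{L}$ and dual update $\lambda_i\leftarrow[\lambda_i+\alpha(J_{c^{(i)}}(\pi_\theta)-d_i)]_+$ coincides with the CMDP optimum, while convergence of the iteration itself is a separate question lying beyond this identity.
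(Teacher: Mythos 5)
Your proposal is correct, and its algebraic core — differentiating $\mathcal{L}(\pi_\theta,\lambda) = J(\pi_\theta) + \sum_i \lambda_i\bigl(d_i - J_{c^{(i)}}(\pi_\theta)\bigr)$ by linearity, dropping the $\lambda_i d_i$ constants, and reading off the affine dependence on each $\lambda_i$ — is exactly the (implicit) argument the paper relies on: the paper states this proposition without any proof, treating the decomposition as immediate. Where you genuinely go further is in the second and third paragraphs: the paper never derives the score-function form $\nabla_\theta J(\pi_\theta) = \mathbb{E}_{\pi_\theta}\bigl[\sum_t \gamma^t\,\nabla_\theta\log\pi_\theta(a_t|s_t)\,Q^{\pi_\theta}_r(s_t,a_t)\bigr]$ or its cost analogue, nor does it address the regularity conditions (boundedness of $r$ and $c^{(i)}$, $\gamma<1$, a dominated score, positivity of $\pi_\theta$) needed to interchange $\nabla_\theta$ with the expectation and the infinite sum. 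These additions are what turn the proposition from a formal identity into a statement about computable, sampleable estimators — which is the real content behind the claim that actor-critic methods "employ this simultaneous gradient update." You are also right, and more careful than the paper's wording, in noting that the dual variable moves \emph{against} $\nabla_\lambda\mathcal{L}$ (the projected update $\lambda_i \leftarrow [\lambda_i + \alpha(J_{c^{(i)}}(\pi_\theta)-d_i)]_+$ is dual descent), and in flagging that convergence of the coupled primal-dual iteration is a separate question from the gradient identity itself.
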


\begin{proposition}[{\bf Policy Performance Bounds~\cite{Achiam2017}}]
For any two policies $\pi$ and $\pi'$, let $J(\pi)$ denote the expected reward return,
and $J_{C_i}(\pi)$ denote the expected return for a cost function $C_i$.
The change in performance when updating from $\pi$ to $\pi'$ is bounded as (as given by Corollary 1 and Corollary 2 of \cite{Achiam2017}).

\noindent\textbf{Reward (Lower Bound):}
The improvement in expected reward is bounded by
\begin{align}
J(\pi') - J(\pi)
&\ge
\frac{1}{1 - \gamma}
\mathbb{E}_{s \sim d^{\pi},\, a \sim \pi'} \Big[
A^{\pi}(s,a) \nonumber \\
&\quad
- \frac{2 \gamma \epsilon^{\pi'}}{1 - \gamma}
D_{TV}(\pi' \| \pi)[s]
\Big].
\label{eq:reward_bound}
\end{align}

\noindent\textbf{Cost (Upper Bound):}
The change in expected cost is bounded by
\begin{align}
J_{C_i}(\pi') - J_{C_i}(\pi)
&\le
\frac{1}{1 - \gamma}
\mathbb{E}_{s \sim d^{\pi},\, a \sim \pi'} \Big[
A^{\pi}_{C_i}(s,a) \nonumber \\
&\quad
+ \frac{2 \gamma \epsilon^{\pi'}_{C_i}}{1 - \gamma}
D_{TV}(\pi' \| \pi)[s]
\Big].
\label{eq:cost_bound}
\end{align}

\noindent
Here, $A^{\pi}$ and $A^{\pi}_{C_i}$ are the advantage functions for the reward
and cost, respectively; $d^{\pi}$ is the state distribution of policy $\pi$;
the $\epsilon$ terms represent the maximum absolute advantage values;
and $D_{TV}$ denotes the Total Variation divergence.
\end{proposition}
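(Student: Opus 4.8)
The plan is to reduce both inequalities to a single exact identity plus one divergence estimate. The starting point is the performance difference lemma: for any two policies,
\[
J(\pi') - J(\pi) = \frac{1}{1-\gamma}\,\mathbb{E}_{s\sim d^{\pi'},\,a\sim\pi'}\big[A^\pi(s,a)\big],
\]
which I would establish by telescoping the discounted sum of advantages along trajectories generated by $\pi'$ (writing $A^\pi(s,a)=r(s,a)+\gamma\,\mathbb{E}_{s'}[V^\pi(s')]-V^\pi(s)$ and collapsing the resulting sum). The only awkward feature of this identity is that the state distribution is $d^{\pi'}$, the visitation measure of the \emph{new} policy, whereas the surrogate in the statement is taken over $d^\pi$. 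Thus the entire content of the proposition is the cost of swapping $d^{\pi'}$ for $d^\pi$.

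Next, writing $\bar A(s)=\mathbb{E}_{a\sim\pi'}[A^\pi(s,a)]$, I would decompose
\[
J(\pi')-J(\pi)=\frac{1}{1-\gamma}\,\mathbb{E}_{s\sim d^\pi}[\bar A(s)]+\frac{1}{1-\gamma}\sum_s\big(d^{\pi'}(s)-d^\pi(s)\big)\bar A(s).
\]
The first term is precisely the surrogate $\frac{1}{1-\gamma}\mathbb{E}_{s\sim d^\pi,\,a\sim\pi'}[A^\pi(s,a)]$, so it remains to bound the error term. By H\"older's inequality, $\big|\sum_s(d^{\pi'}-d^\pi)(s)\bar A(s)\big|\le\|d^{\pi'}-d^\pi\|_1\,\max_s|\bar A(s)|$, and by definition $\max_s|\bar A(s)|=\epsilon^{\pi'}$. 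Controlling this error in absolute value simultaneously yields a \emph{lower} bound (by subtracting it, giving the reward inequality \eqref{eq:reward_bound}) and an \emph{upper} bound (by adding it, giving the cost inequality \eqref{eq:cost_bound}); the cost case is verbatim after replacing $A^\pi$ by $A^\pi_{C_i}$ and $\epsilon^{\pi'}$ by $\epsilon^{\pi'}_{C_i}$.

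The crux is the third step: bounding $\|d^{\pi'}-d^\pi\|_1$ by the per-state policy divergence. Writing $P_\pi(s'|s)=\sum_a\pi(a|s)P(s'|s,a)$ for the induced state kernel, the discounted visitation measure is the resolvent $d^\pi=(1-\gamma)\,\rho\,(I-\gamma P_\pi)^{-1}$. The resolvent identity then gives
\[
d^{\pi'}-d^\pi=\gamma\,d^\pi\,(P_{\pi'}-P_\pi)\,(I-\gamma P_{\pi'})^{-1},
\]
and since $(I-\gamma P_{\pi'})^{-1}=\sum_{t\ge0}(\gamma P_{\pi'})^t$ is $\ell_1$-contractive up to the factor $\tfrac{1}{1-\gamma}$ on signed measures (each $P_{\pi'}^t$ preserves total mass), I obtain $\|d^{\pi'}-d^\pi\|_1\le\tfrac{\gamma}{1-\gamma}\|d^\pi(P_{\pi'}-P_\pi)\|_1$. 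A final row-wise estimate, $\sum_{s'}|P_{\pi'}(s'|s)-P_\pi(s'|s)|\le\sum_a|\pi'(a|s)-\pi(a|s)|=2\,D_{TV}(\pi'\|\pi)[s]$, collapses this to $\|d^{\pi'}-d^\pi\|_1\le\tfrac{2\gamma}{1-\gamma}\,\mathbb{E}_{s\sim d^\pi}[D_{TV}(\pi'\|\pi)[s]]$.

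Combining the H\"older bound with this divergence estimate produces the error term $\frac{2\gamma\epsilon^{\pi'}}{(1-\gamma)^2}\mathbb{E}_{s\sim d^\pi}[D_{TV}(\pi'\|\pi)[s]]$, which after folding the outer $\tfrac{1}{1-\gamma}$ prefactor inside the expectation matches the stated bounds exactly. I expect the main obstacle to be this third step: justifying the resolvent manipulation and the $\ell_1$ operator-norm bound rigorously on a (possibly continuous) state space, where the Neumann series expansion and the interchange of summation with the norm require care. Everything else is either the standard telescoping identity or elementary inequalities, and the two-sided nature of the absolute-value bound is exactly what delivers the reward lower bound and the cost upper bound from one argument.
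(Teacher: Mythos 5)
Your proposal is correct and follows essentially the same route as the paper's proof: an exact performance identity, a change of measure from $d^{\pi'}$ to $d^{\pi}$ controlled by H\"older's inequality, and a bound on the visitation-measure shift by the statewise TV divergence between policies (your $\ell_1$ estimate $\|d^{\pi'}-d^{\pi}\|_1 \le \tfrac{2\gamma}{1-\gamma}\,\mathbb{E}_{s\sim d^{\pi}}[D_{TV}(\pi'\|\pi)[s]]$ is exactly twice the paper's bound on $D_{TV}(d^{\pi'}\|d^{\pi})$). The only differences are presentational: you fix $f=V^{\pi}$ (the performance difference lemma) from the outset where the paper keeps a general shaping function $f$ and specializes in its final step, and you supply an explicit resolvent-identity derivation of the visitation-shift bound that the paper merely asserts.
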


\begin{proof}[Sketch of proof, detail in \cite{Achiam2017}]
\textbf{(1) Reward-shaping identity and surrogate.}
For any $f:S\to\mathbb{R}$ define
$\delta_f(s,a,s') := R(s,a,s')+\gamma f(s')-f(s)$ and
\[
L_{\pi,f}(\pi') :=
\mathbb{E}_{s\sim d^{\pi},\,a\sim\pi',\,s'\sim P}\!\Big[
\big(\tfrac{\pi'(a|s)}{\pi(a|s)}-1\big)\,\delta_f(s,a,s')
\Big].
\]
Using the discounted visitation measures one shows
\[
J(\pi')-J(\pi)
= \frac{1}{1-\gamma}\!\left(
\mathbb{E}_{d^{\pi'}}[\delta_f]-\mathbb{E}_{d^{\pi}}[\delta_f]\right),
\]
and by adding/subtracting $\langle d^{\pi},\overline{\delta}^{\pi'}_f\rangle$
and applying H\"{o}lder's inequality, \cite{Achiam2017} obtain the two-sided bound
\begin{align*}
\frac{1}{1\!-\!\gamma}\!\left(L_{\pi,f}(\pi') - 2\,\|\pi'\|_{f}\,D_{TV}(d^{\pi'}\!\|\!d^{\pi})\right)
&\le J(\pi')-J(\pi) \\
&\le
\frac{1}{1\!-\!\gamma}\!\left(L_{\pi,f}(\pi') + 2\,\|\pi'\|_{f}\,D_{TV}(d^{\pi'}\!\|\!d^{\pi})\right),
\end{align*}
where $\|\pi'\|_{f} := \max_{s}\big|\mathbb{E}_{a\sim\pi'}[\delta_f(s,a,s')]\big|$.

\textbf{(2) From $D_{TV}(d^{\pi'}\!\|\!d^{\pi})$ to statewise TV between policies.}
Bound the shift in discounted visitation by the average per-state TV between action distributions, yielding
\[
D_{TV}(d^{\pi'}\!\|\!d^{\pi})
\;\le\; \tfrac{\gamma}{1-\gamma}\,
\mathbb{E}_{s\sim d^{\pi}}\!\big[D_{TV}(\pi'\!\|\!\pi)[s]\big].
\]
Substituting this into the previous display gives the Theorem~1 bounds in \cite{Achiam2017}.

\textbf{(3) Choose $f$ to recover advantages.}
Setting $f=V^{\pi}$ makes $\mathbb{E}_{s'\sim P}[\delta_f|s,a]=A^{\pi}(s,a)$, and
$\|\pi'\|_{f}$ becomes $\epsilon^{\pi'}:=\max_s|\mathbb{E}_{a\sim\pi'}[A^{\pi}(s,a)]|$, yielding
\[
J(\pi')-J(\pi)\;\ge\;
\frac{1}{1-\gamma}\,
\mathbb{E}_{s\sim d^{\pi},\,a\sim\pi'}\!\Big[
A^{\pi}(s,a)
- \frac{2\gamma\,\epsilon^{\pi'}}{1-\gamma}\,D_{TV}(\pi'\!\|\!\pi)[s]
\Big],
\]
which is Eq.~\eqref{eq:reward_bound} (lower bound).
Likewise, taking $f=V^{\pi}_{C_i}$ produces the \emph{cost} version with
$A^{\pi}_{C_i}$ and $\epsilon^{\pi'}_{C_i}$, giving Eq.~\eqref{eq:cost_bound} (upper bound).
\end{proof}

These bounds are significant as they justify using the expected advantage
(the first term inside the expectation) as a surrogate objective for policy optimization.
The bounds formally characterize the worst-case approximation error
(the second term) that arises from using the state distribution $d^{\pi}$ of the old policy
instead of the new policy's distribution $d^{\pi'}$.

\paragraph{Safe exploration and probably safe learning:} A distinction in SafeRL theory is between methods that guarantee \emph{safety during learning} vs. only \emph{at convergence}. Most theoretical results (like the ones above) ensure that the final learned policy can satisfy constraints. Ensuring that intermediate policies (during training) also satisfy constraints is much harder. Constrained policy optimization approaches (Section 4) aim to maintain safety at each iteration by conservative updates \cite{Achiam2017}. Another line of work uses PAC-style analysis or high-probability bounds to derive exploration strategies that with high probability never violate constraints beyond a tolerance \cite{Turchetta2016,Berkenkamp2017}. These often rely on optimistic models or Lyapunov functions to formally verify safe regions of state-space the agent can explore. Though we do not delve into detailed proofs, we note that providing safety guarantees during learning typically requires additional assumptions (such as mild system dynamics, or an initial safe policy to bootstrap from). Having established the CMDP framework and theoretical background, we now move on to discuss concrete algorithms and methods developed for SafeRL, both in the single-agent case (Section 4) and multi-agent extensions (Section 5).

\section{State-of-the-Art Methods in SafeRL and SafeMARL} In this section, we survey major methods and algorithms in Safe Reinforcement Learning, covering both single-agent SafeRL in CMDP settings and extensions to multi-agent SafeMARL. We organize the discussion by methodological categories, explaining how each approach incorporates safety and highlighting key algorithms. For each category, we provide examples of state-of-the-art techniques and cite representative works.

\begin{figure}[t!]
\centering
\begin{tikzpicture}[
  every node/.style={font=\scriptsize},
  root/.style={rectangle, draw=blue!70!black, fill=blue!15, rounded corners=4pt,
    minimum height=0.8cm, minimum width=3.5cm, align=center, line width=0.9pt,
    font=\footnotesize\bfseries},
  cat/.style={rectangle, draw=#1!70!black, fill=#1!10, rounded corners=3pt,
    minimum height=0.7cm, minimum width=2.2cm, align=center, line width=0.7pt,
    font=\scriptsize\bfseries},
  cat/.default=teal,
  method/.style={rectangle, draw=gray!50, fill=gray!5, rounded corners=2pt,
    minimum height=0.45cm, minimum width=2.2cm, align=center, font=\scriptsize,
    line width=0.5pt},
  arr/.style={-{Stealth[length=2mm]}, line width=0.6pt, gray!60!black},
]

% Root
\node[root] (root) at (0,0) {SafeRL \& SafeMARL Methods};

% Four categories — evenly spaced with absolute coords
\node[cat=orange] (lagr)   at (-4.5,-1.5) {Constrained\\[-1pt]Optimization};
\node[cat=teal]   (shield) at (-1.5,-1.5) {Safety Shields};
\node[cat=purple] (risk)   at (1.5,-1.5)  {Risk-Sensitive};
\node[cat=cyan]   (marl)   at (4.5,-1.5)  {Multi-Agent\\[-1pt]Extensions};

% Arrows from root
\draw[arr] (root.south) -- ++(0,-0.2) -| (lagr.north);
\draw[arr] (root.south) -- ++(0,-0.2) -| (shield.north);
\draw[arr] (root.south) -- ++(0,-0.2) -| (risk.north);
\draw[arr] (root.south) -- ++(0,-0.2) -| (marl.north);

% Lagrangian methods (stacked below)
\node[method] (m1) at (-4.5,-2.6) {Lagrange Actor-Critic};
\node[method] (m2) at (-4.5,-3.2) {CPO (Trust Region)};
\node[method] (m3) at (-4.5,-3.8) {Lyapunov-based};
\node[method] (m4) at (-4.5,-4.4) {PCPO (Projection)};
\draw[arr, gray!40] (lagr.south) -- (m1.north);
\draw[arr, gray!40] (m1.south) -- (m2.north);
\draw[arr, gray!40] (m2.south) -- (m3.north);
\draw[arr, gray!40] (m3.south) -- (m4.north);

% Shield methods (stacked below)
\node[method] (s1) at (-1.5,-2.6) {Safety Layer (QP)};
\node[method] (s2) at (-1.5,-3.2) {LTL Shielding};
\node[method] (s3) at (-1.5,-3.8) {Human Oversight};
\draw[arr, gray!40] (shield.south) -- (s1.north);
\draw[arr, gray!40] (s1.south) -- (s2.north);
\draw[arr, gray!40] (s2.south) -- (s3.north);

% Risk methods (stacked below)
\node[method] (r1) at (1.5,-2.6) {CVaR Optimization};
\node[method] (r2) at (1.5,-3.2) {Distributional RL};
\draw[arr, gray!40] (risk.south) -- (r1.north);
\draw[arr, gray!40] (r1.south) -- (r2.north);

% MARL methods (stacked below)
\node[method] (ma1) at (4.5,-2.6) {MACPO (Centralized)};
\node[method] (ma2) at (4.5,-3.2) {Decentralized ($\kappa$-hop)};
\node[method] (ma3) at (4.5,-3.8) {Shielded MARL};
\node[method] (ma4) at (4.5,-4.4) {Stackelberg SafeRL};
\draw[arr, gray!40] (marl.south) -- (ma1.north);
\draw[arr, gray!40] (ma1.south) -- (ma2.north);
\draw[arr, gray!40] (ma2.south) -- (ma3.north);
\draw[arr, gray!40] (ma3.south) -- (ma4.north);

% Section references
\node[font=\scriptsize\itshape, text=orange!60!black] at (-4.5,-4.85) {Sec.~4.1};
\node[font=\scriptsize\itshape, text=teal!60!black]   at (-1.5,-4.25) {Sec.~4.2};
\node[font=\scriptsize\itshape, text=purple!60!black]  at (1.5,-3.65)  {Sec.~4.3};
\node[font=\scriptsize\itshape, text=cyan!60!black]    at (4.5,-4.85)  {Sec.~4.4};

\end{tikzpicture}
\caption{Taxonomy of SafeRL and SafeMARL methods surveyed in this paper. Methods are categorized into four groups: constrained optimization approaches (Sec.~4.1), safety shield mechanisms (Sec.~4.2), risk-sensitive methods (Sec.~4.3), and multi-agent extensions (Sec.~4.4).}
\label{fig:methods_taxonomy}
\end{figure}

\subsection{Lagrangian-based Policy Optimization} One broad class of SafeRL algorithms uses the primal-dual (Lagrangian) approach discussed earlier to enforce constraints. The idea is to transform the constrained problem into a sequence of unconstrained problems with adjusted rewards.

\textbf{\bf Lagrangian Actor-Critic:} In this approach, one augments the standard RL loss with penalty terms for constraint costs. For example, one can define a penalized reward $r_\lambda(s,a) = r(s,a) - \lambda c(s,a)$ (eqn (11) in \cite{Tessler2019}) for a single-constraint problem, where $\lambda$ is treated as a learnable parameter. An actor-critic algorithm (Algorithm 1 in \cite{Tessler2019}) can then be used:
The \emph{actor} (policy $\pi_\theta$) is updated with respect to the penalized objective $J_{\text{pen}}(\pi) = J(\pi) - \lambda J_c(\pi)$, using policy gradient or other optimization.
The \emph{critic(s)} estimate both the value of the reward and the cost (often one critic for $V^\pi(s)$ and one for $V_c^\pi(s)$).
The Lagrange multiplier $\lambda$ is updated by gradient ascent on the constraint satisfaction term, e.g. $\lambda \leftarrow \lambda + \beta (J_c(\pi) - d)$.
This simple scheme is often called the \textit{Lagrange method} or \textit{reward shaping method} in safe RL. It was used in early safe deep RL implementations (e.g., \cite{Tessler2019} for safe DQN with constraints, and policy-gradient variants, i.e., Trust Region Policy Optimization and Proximal Policy Optimization in \cite{Ray2019}). While straightforward, a drawback is that the penalty coefficient $\lambda$ can be hard to tune (``Our baseline results for constrained RL indicate a need for
stronger and/or better-tuned algorithms to succeed on Safety Gym environments'' as quoted in \cite{Ray2019}) and the method does not guarantee strict constraint satisfaction until convergence. The agent might violate constraints during learning if $\lambda$ is not large enough, or conversely, learn too slowly if $\lambda$ is too large initially.

\begin{figure}[t!]
\centering
\begin{tikzpicture}[
  every node/.style={font=\small},
  block/.style={rectangle, draw=#1!70!black, fill=#1!8, rounded corners=3pt,
    minimum height=1.0cm, minimum width=2.8cm, align=center, line width=0.8pt},
  block/.default=blue,
  arr/.style={-{Stealth[length=2.5mm]}, line width=0.7pt, #1!70!black},
  arr/.default=gray,
  lbl/.style={font=\scriptsize, fill=white, inner sep=1pt},
]

% Row 1: Environment (top center)
\node[block=teal, minimum width=3.0cm] (env) at (0, 3.5) {\textbf{Environment}};

% Row 2: Actor (aligned under Environment), Multiplier (right)
\node[block=blue] (actor) at (0, 1.5) {\textbf{Actor}\\$\pi_\theta(a|s)$};
\node[block=orange, minimum width=2.4cm] (lam) at (5.0, 1.5) {\textbf{Multiplier}\\$\lambda$};

% Row 3: Reward Critic (left), Cost Critic (right)
\node[block=green] (vr) at (-3.0, -0.8) {\textbf{Reward Critic}\\$\hat{V}^\pi(s)$};
\node[block=red] (vc) at (3.0, -0.8) {\textbf{Cost Critic}\\$\hat{V}_c^\pi(s)$};

% === Arrows ===

% 1. Environment -> Actor: state (straight down, offset left)
\draw[arr=teal] ([xshift=-0.3cm]env.south) -- node[lbl, left] {$s_t$} ([xshift=-0.3cm]actor.north);

% 2. Actor -> Environment: action (straight up, offset right)
\draw[arr=blue] ([xshift=0.3cm]actor.north) -- node[lbl, right] {$a_t$} ([xshift=0.3cm]env.south);

% 3. Environment -> Reward Critic: reward (diagonal to critic top middle)
\draw[arr=green] ([xshift=-0.8cm]env.south) -- node[lbl, left] {$r_t$} (vr.north);

% 4. Environment -> Cost Critic: cost (diagonal to critic top middle)
\draw[arr=red] ([xshift=0.8cm]env.south) -- node[lbl, right] {$c_t$} (vc.north);

% 5. Reward Critic -> Actor: reward advantage (to actor bottom, offset left)
\draw[arr=green] (vr.north east) -- node[lbl, above, sloped] {$\hat{A}^\pi(s,a)$} ([xshift=-0.4cm]actor.south);

% 6. Cost Critic -> Actor: cost advantage (to actor bottom, offset right)
\draw[arr=red] (vc.north west) -- node[lbl, above, sloped] {$\hat{A}_c^\pi(s,a)$} ([xshift=0.4cm]actor.south);

% 7. Cost Critic -> Multiplier: constraint violation (diagonal to multiplier bottom middle)
\draw[arr=orange] (vc.east) -- node[lbl, below, sloped, font=\scriptsize] {$J_c(\pi) - d$} (lam.south);

% 8. Multiplier -> Actor: penalty weight (straight left)
\draw[arr=orange, dashed] (lam.west) -- node[lbl, above] {$\lambda$ penalty} (actor.east);

% Update equations (below everything)
\node[font=\scriptsize, align=left, text=gray!70!black] at (1.0, -2.3) {
  Policy update: $\theta \leftarrow \theta + \alpha \nabla_\theta [J(\pi) - \lambda J_c(\pi)]$
  \qquad Dual update: $\lambda \leftarrow [\lambda + \beta(J_c(\pi) - d)]_+$
};

\end{tikzpicture}
\caption{Lagrangian actor-critic architecture for SafeRL. The actor (policy network) is updated using advantage estimates from both a reward critic and a cost critic. The Lagrange multiplier $\lambda$ is adapted online based on constraint violation, automatically balancing reward maximization with safety.}
\label{fig:lagrangian_ac}
\end{figure}

\textbf{\bf Projected Lagrangian (Constrained Policy Optimization):} Achiam \emph{et al.} \cite{Achiam2017} introduced \textbf{Constrained Policy Optimization (CPO)}, a landmark algorithm that improves upon the basic Lagrangian method by ensuring each policy update is safe. CPO is built on trust-region policy optimization:
At each iteration, it solves a local constrained optimization: maximize policy improvement subject to a constraint that the cost does not increase beyond a small tolerance. This is done by a quadratic approximation of the objective and a linear approximation of the constraints (using policy gradient and cost gradient), then solving a convex subproblem.
If the proposed update violates the constraint (predicted cost increase too high), CPO backtracks or projects the policy update to the nearest feasible update.
CPO provides theoretical guarantees of \emph{near-constraint satisfaction at each iteration}: essentially, it never overshoots the constraint by more than a certain second-order error term, keeping training safe.
CPO demonstrated that one can train neural network policies for control tasks while maintaining safety throughout training \cite{Achiam2017}. It was the first general-purpose safe RL algorithm with such guarantees. However, CPO is more complex and computationally heavier than standard policy gradient (due to solving the constrained optimization subproblem each step). It also requires a reliable estimation of the cost value and cost advantage, which can be challenging; for complex environments, cost estimates may be derived from a separate classifier trained to identify safe/unsafe actions \cite{chirra2025safetyfeedbackconstrainedrl}.

\begin{proposition}[{\bf CPO Trust Region Safety Guarantee \cite{Achiam2017}}]
Let $\pi_k$ be the current feasible policy and $\pi_{k+1}$ be the new policy obtained by solving the CPO trust region optimization problem:
\begin{align}
\pi_{k+1}
& =
\arg \max_{\pi \in \Pi_{\theta}}
\; \mathbb{E}_{s \sim d^{\pi_k},\, a \sim \pi}
\!\left[A^{\pi_k}(s,a)\right] \nonumber \\
& \text{s.t.} \quad
J_{C_i}(\pi_k)
+ \frac{1}{1-\gamma}
\mathbb{E}_{s \sim d^{\pi_k},\, a \sim \pi}
\!\left[A_{C_i}^{\pi_k}(s,a)\right]
\le d_i,~\forall i, \nonumber \\
& \qquad \overline{D}_{\mathrm{KL}}(\pi \, \| \, \pi_k) \le \delta.
\label{eq:cpo_opt}
\end{align}

The new policy $\pi_{k+1}$ is guaranteed to satisfy the original cost constraint $J_{C_i}$ up to a bounded error term:
\begin{equation}
J_{C_i}(\pi_{k+1})
\le
d_i +
\frac{\sqrt{2\delta}\,\gamma\,\epsilon_{C_i}^{\pi_{k+1}}}{(1-\gamma)^2}.
\label{eq:cpo_bound}
\end{equation}

Here, $\epsilon_{C_i}^{\pi_{k+1}} =
\max_s \!\big|\mathbb{E}_{a \sim \pi_{k+1}}\![A_{C_i}^{\pi_k}(s,a)]\big|$.
\end{proposition}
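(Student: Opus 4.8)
The plan is to obtain \eqref{eq:cpo_bound} as a direct specialization of the cost upper bound \eqref{eq:cost_bound} from the Policy Performance Bounds proposition, instantiated at $\pi = \pi_k$ and $\pi' = \pi_{k+1}$. Since the statewise term $D_{TV}(\pi_{k+1}\|\pi_k)[s]$ and the constant $\epsilon_{C_i}^{\pi_{k+1}}$ do not depend on the action, the action-averaging in \eqref{eq:cost_bound} acts only on the advantage, so splitting the expectation and adding $J_{C_i}(\pi_k)$ to both sides gives
\[
J_{C_i}(\pi_{k+1})
\le
\underbrace{J_{C_i}(\pi_k) + \tfrac{1}{1-\gamma}\,\mathbb{E}_{s\sim d^{\pi_k},\,a\sim\pi_{k+1}}\!\big[A_{C_i}^{\pi_k}(s,a)\big]}_{\le\, d_i}
+ \frac{2\gamma\,\epsilon_{C_i}^{\pi_{k+1}}}{(1-\gamma)^2}\,\mathbb{E}_{s\sim d^{\pi_k}}\!\big[D_{TV}(\pi_{k+1}\|\pi_k)[s]\big].
\]
The underbraced quantity is exactly the left-hand side of the CPO cost constraint in \eqref{eq:cpo_opt}, so feasibility of $\pi_{k+1}$ for that subproblem bounds it by $d_i$. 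It then remains only to control the trust-region penalty term.

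For that term I would reduce total variation to the KL trust region in two steps. Pinsker's inequality gives $D_{TV}(\pi_{k+1}\|\pi_k)[s] \le \sqrt{\tfrac{1}{2}\,D_{KL}(\pi_{k+1}\|\pi_k)[s]}$ pointwise in $s$; then, because the square root is concave, Jensen's inequality lets me pull the state expectation inside the root:
\[
\mathbb{E}_{s\sim d^{\pi_k}}\!\big[D_{TV}(\pi_{k+1}\|\pi_k)[s]\big]
\le
\sqrt{\tfrac{1}{2}\,\mathbb{E}_{s\sim d^{\pi_k}}\!\big[D_{KL}(\pi_{k+1}\|\pi_k)[s]\big]}
= \sqrt{\tfrac{1}{2}\,\overline{D}_{\mathrm{KL}}(\pi_{k+1}\|\pi_k)}
\le \sqrt{\delta/2},
\]
using the KL trust-region constraint $\overline{D}_{\mathrm{KL}}(\pi_{k+1}\|\pi_k)\le\delta$ from \eqref{eq:cpo_opt}. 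Substituting $\sqrt{\delta/2} = \tfrac{1}{2}\sqrt{2\delta}$ into the penalty term cancels the factor of $2$ and produces exactly $\frac{\sqrt{2\delta}\,\gamma\,\epsilon_{C_i}^{\pi_{k+1}}}{(1-\gamma)^2}$, which added to the $d_i$ bound yields \eqref{eq:cpo_bound}.

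There is no deep obstacle here — the result is essentially a corollary of the already-established performance-difference inequality — but two points require care. First, the Jensen step only runs in the correct direction because the per-state TV bound sits under a single outer square root after Pinsker, so the averaging over $d^{\pi_k}$ must be performed \emph{before} taking the root; reversing the order would break the bound. Second, the constant $\epsilon_{C_i}^{\pi_{k+1}}$ is defined through the new policy $\pi_{k+1}$, so \eqref{eq:cpo_bound} is an a~posteriori guarantee: it certifies the realized update rather than predicting feasibility in advance. This dependence is intrinsic to the statement and not a gap in the argument, so beyond Pinsker's and Jensen's inequalities no additional machinery is needed.
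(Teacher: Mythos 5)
Your proposal is correct and follows essentially the same route as the paper's own proof: instantiate the cost performance bound at $(\pi_k,\pi_{k+1})$, invoke feasibility of $\pi_{k+1}$ in the CPO subproblem to bound the surrogate term by $d_i$, and control the total-variation penalty via Pinsker's and Jensen's inequalities under the KL trust region. Your added remarks on the direction of the Jensen step and the a~posteriori nature of $\epsilon_{C_i}^{\pi_{k+1}}$ are accurate clarifications rather than deviations.
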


\begin{proof}[Sketch of proof, detail in \cite{Achiam2017}]
\textbf{(1) Start from the cost performance bound.}
For any cost $C_i$ and policies $\pi',\pi$, \cite{Achiam2017} give (Corollary~2):
\begin{align*}
J_{C_i}(\pi') - J_{C_i}(\pi)
&\;\le\;
\frac{1}{1-\gamma}\,
\mathbb{E}_{s\sim d^\pi,\,a\sim\pi'}\!\big[A^{\pi}_{C_i}(s,a)\big] \\
&\;+\;
\frac{2\gamma}{(1-\gamma)^2}\,\epsilon^{\pi'}_{C_i}\,
\mathbb{E}_{s\sim d^\pi}\!\big[D_{TV}(\pi'\|\pi)[s]\big],
\end{align*}
where $\epsilon^{\pi'}_{C_i}=\max_s \big|\mathbb{E}_{a\sim\pi'}[A^{\pi}_{C_i}(s,a)]\big|$.

\textbf{(2) Replace TV by average KL under a trust region.}
By Pinsker's inequality and Jensen's inequality,
\[
\mathbb{E}_{s\sim d^\pi}[D_{TV}(\pi'\|\pi)[s]] \;\le\;
\sqrt{\tfrac{1}{2}\,\mathbb{E}_{s\sim d^\pi}[D_{\mathrm{KL}}(\pi'\|\pi)[s]]}.
\]
If $\pi'$ is produced by the CPO update with trust region
$\overline D_{\mathrm{KL}}(\pi'\|\pi)\!=\!\mathbb{E}_{s\sim d^\pi}[D_{\mathrm{KL}}(\pi'\|\pi)[s]] \le \delta$,
then
$
\mathbb{E}_{s\sim d^\pi}[D_{TV}(\pi'\|\pi)[s]] \le \sqrt{\delta/2}.
$

\textbf{(3) Apply to the constrained subproblem and re-arrange.}
In the CPO subproblem, the surrogate constraint enforces
$
J_{C_i}(\pi_k) + \tfrac{1}{1-\gamma}\,
\mathbb{E}_{s\sim d^{\pi_k},a\sim\pi}\big[A^{\pi_k}_{C_i}(s,a)\big] \le d_i.
$
Plugging $\pi'=\pi_{k+1}$ and $\pi=\pi_k$ into the inequality of Step~(1), and
then substituting the TV$\to$KL bound from Step~(2) yields
\[
J_{C_i}(\pi_{k+1}) \le
d_i + \frac{2\gamma}{(1-\gamma)^2}\,\epsilon^{\pi_{k+1}}_{C_i}\,\sqrt{\frac{\delta}{2}}
\;=\;
d_i + \frac{\sqrt{2\delta}\,\gamma\,\epsilon^{\pi_{k+1}}_{C_i}}{(1-\gamma)^2},
\]
which is exactly the stated bound.
\end{proof}

\begin{figure}[t!]
\centering
\begin{tikzpicture}[
  every node/.style={font=\small},
]
% Axes (faint)
\draw[-{Stealth[length=2mm]}, gray!30, line width=0.4pt] (-0.5,0) -- (7.5,0) node[right, font=\scriptsize, text=gray!50] {$\theta_1$};
\draw[-{Stealth[length=2mm]}, gray!30, line width=0.4pt] (0,-0.5) -- (0,5.8) node[above, font=\scriptsize, text=gray!50] {$\theta_2$};

% Feasible region (constraint satisfied region) - large area above the constraint line
\fill[green!8] (0,2.2) -- (7.5,0.6) -- (7.5,5.8) -- (0,5.8) -- cycle;

% Constraint boundary (linear approximation)
\draw[red!70!black, line width=1.2pt] (0,2.2) -- (7.5,0.6);

% Constraint boundary label (slanted along the red line, at right end away from circle)
\node[font=\scriptsize\bfseries, text=red!70!black, rotate=-12.2, anchor=north] at (6.0,0.95) {Constraint: $J_c(\pi) = d$};

% Label feasible/infeasible (in clear areas)
\node[font=\scriptsize\itshape, text=green!50!black] at (6.0,4.8) {Feasible: $J_c \leq d$};
\node[font=\scriptsize\itshape, text=red!50!black] at (5.5,0.3) {Infeasible: $J_c > d$};

% Current policy point
\fill[blue!70!black] (3,3.5) circle (3pt);
\node[left, font=\small\bfseries, text=blue!70!black] at (2.85,3.5) {$\theta_k$};

% Trust region (circle)
\draw[blue!50!black, line width=0.8pt, dashed] (3,3.5) circle (1.5);

% Trust region label (top-left corner, well outside circle)
\node[font=\scriptsize, text=blue!50!black, align=center] at (0.8,5.4) {Trust region\\$D_{\mathrm{KL}} \leq \delta$};
% Arrow from label to circle edge
\draw[-{Stealth[length=1.5mm]}, blue!40, line width=0.4pt] (1.3,5.0) -- (1.8,4.6);

% Reward improvement direction (top, clear area)
\draw[-{Stealth[length=2mm]}, gray!50, line width=0.5pt, dotted] (4.5,5.3) -- (6.5,5.3)
  node[right, font=\scriptsize, text=gray!60!black] {$\nabla J(\pi)$};

% Standard PG direction (unconstrained) — into infeasible region
\draw[-{Stealth[length=3mm]}, orange!80!black, line width=1.2pt]
  (3,3.5) -- (4.0,2.0);
% Label below the arrow tip, in the infeasible region
\node[font=\scriptsize\bfseries, text=orange!80!black, anchor=north west] at (4.1,1.85) {Standard PG};
\node[font=\tiny, text=orange!70!black, anchor=north east] at (3.9,0.9) {(violates constraint)};

% CPO direction (stays feasible)
\draw[-{Stealth[length=3mm]}, green!60!black, line width=1.5pt]
  (3,3.5) -- (4.4,3.1);

% Updated policy point
\fill[green!60!black] (4.4,3.1) circle (3pt);

% CPO label and theta_{k+1} (to the right, spaced vertically)
\node[font=\small\bfseries, text=green!60!black, anchor=west] at (4.6,3.35) {$\theta_{k+1}$};
\node[font=\scriptsize\bfseries, text=green!60!black, anchor=west] at (4.6,2.95) {CPO update};

% PCPO projection illustration (from standard PG tip to CPO tip)
\draw[-{Stealth[length=2.5mm]}, purple!70!black, line width=0.8pt, dashed]
  (4.0,2.0) -- (4.4,3.1);

% Projection label (to the right of the dashed arrow, clear of other labels)
\node[font=\scriptsize, text=purple!70!black, anchor=west] at (4.5,2.45) {PCPO projection};

\end{tikzpicture}
\caption{Geometric illustration of Constrained Policy Optimization (CPO). At each iteration, CPO seeks the policy update that maximizes reward improvement within a trust region (KL divergence constraint) while staying in the feasible set ($J_c \leq d$). If the standard policy gradient step would violate the constraint, CPO projects or backtracks the update to the feasible boundary. PCPO achieves this via explicit projection (dashed purple arrow).}
\label{fig:cpo_trust_region}
\end{figure}

Many subsequent works have built on or modified CPO:
\textbf{PCPO (Projection-based CPO):} an algorithm that explicitly projects the policy gradient to the feasible set defined by constraint gradients \cite{yang2020projection}. It is a simplification that avoids solving a quadratic program but still aims to keep updates safe by geometric projection.

\textbf{TRPO-Lagrangian:} A simpler baseline where one applies a trust-region update on the penalized objective $J - \lambda J_c$ instead of solving a constrained QP. This does not guarantee strict feasibility but often empirically manages constraint violations by proper $\lambda$ adaptation. OpenAI's Safety Gym benchmark release \cite{openai_safety,Ray2019} used such baselines\footnote{\url{https://github.com/openai/safety-starter-agents}}.

\textbf{Actor-Critic with Lyapunov:} Chow \emph{et al.} \cite{Chow2018} proposed using a Lyapunov function (a monotonic function of the cost-to-go) to derive a safe update rule. They ensure the new policy does not increase a Lyapunov function, which in turn guarantees the constraint remains satisfied. This can be seen as another form of trust-region or projection method specialized using Lyapunov theory.

\textbf{Off-policy and Model-based extensions:} While most policy optimization methods are on-policy, there have been adaptations to off-policy learning:
\emph{Safe DDPG or TD3:} by incorporating a cost critic and Lagrange multiplier, one can train deterministic policies (as in DDPG) with a constraint. For example, a constrained variant of TD3 (Twin Delayed Deep Deterministic Policy Gradient\cite{fujimoto2018addressing}\footnote{\url{https://spinningup.openai.com/en/latest/algorithms/td3.html}}) was proposed by \cite{zhang2023x,yang2025proactiveconstrainedpolicyoptimization}.

\textbf{Model-based SafeRL:} Berkenkamp \emph{et al.} \cite{Berkenkamp2017,Berkenkamp2015,berkenkamp2016safe} used Gaussian process models of the dynamics to ensure safety. They construct a stabilizing controller (via control theory) that acts as a baseline policy and only allow the learning agent to explore if it can certify (using a Lyapunov condition) that the new policy is safe. While not directly a CMDP approach, this provides an alternative angle: blending traditional control safety with RL exploration.

\subsection{Safety Shields and Action Correction} Another category of SafeRL methods focuses on safe exploration: how to prevent an agent from ever taking an action that could lead to catastrophe. These methods act as a layer on top of any standard RL algorithm, modifying or filtering its actions (see Figure~\ref{fig:safety_shield}):

\begin{figure}[t!]
\centering
\begin{tikzpicture}[
  every node/.style={font=\small},
  block/.style={rectangle, draw=#1!70!black, fill=#1!8, rounded corners=3pt,
    minimum height=1.0cm, minimum width=2.2cm, align=center, line width=0.8pt},
  block/.default=blue,
  decision/.style={diamond, draw=red!70!black, fill=red!5, minimum width=1.6cm,
    minimum height=1.2cm, align=center, line width=0.7pt, font=\scriptsize,
    inner sep=1pt},
  arr/.style={-{Stealth[length=2.5mm]}, line width=0.8pt, #1!70!black},
  arr/.default=gray,
  lbl/.style={font=\scriptsize, fill=white, inner sep=1pt},
]

% RL Agent
\node[block=blue, minimum width=2.4cm] (agent) {\textbf{RL Agent}\\$\pi_\theta(a|s)$};

% Proposed action
\node[right=1.0cm of agent] (act_label) {};

% Safety Shield
\node[block=red, right=2.8cm of agent, minimum width=2.6cm, minimum height=1.3cm] (shield) {\textbf{Safety Shield}\\(Filter / QP / \\Formal Verifier)};

% Environment
\node[block=teal, right=2.8cm of shield, minimum width=2.4cm] (env) {\textbf{Environment}};

% Safety Model (below shield)
\node[block=orange, below=1.2cm of shield, minimum width=2.6cm] (model) {\textbf{Safety Model}\\(Dynamics / LTL spec /\\Learned model)};

% Arrows
% Agent -> Shield
\draw[arr=blue] (agent.east) -- node[lbl, above] {Proposed $a_t$} (shield.west);

% Shield -> Environment (safe action)
\draw[arr=green] (shield.east) -- node[lbl, above] {Safe $a'_t$} (env.west);

% Environment -> Agent (feedback loop, routed well below Safety Model)
\draw[arr=teal] (env.south) -- ++(0,-3.5) -| node[lbl, pos=0.25, below] {$s_{t+1}, r_t, c_t$} (agent.south);

% Safety Model -> Shield
\draw[arr=orange] (model.north) -- node[lbl, right, font=\scriptsize, align=center] {Safe/unsafe\\prediction} (shield.south);

% Environment -> Safety Model (diagonal from env to model)
\draw[arr=teal, dashed] (env.south west) -- node[lbl, above, sloped] {$s_t$} (model.north east);

% Safe/override annotation
\node[right=0.1cm of shield.north east, font=\scriptsize\itshape, text=green!50!black] {If safe: $a'_t = a_t$};
\node[right=0.1cm of shield.south east, font=\scriptsize\itshape, text=red!50!black] {If unsafe: $a'_t = a_{\text{safe}}$};

\end{tikzpicture}
\caption{Safety shield / action correction pipeline. The RL agent proposes an action $a_t$, which passes through a safety shield before reaching the environment. The shield uses a safety model (dynamics model, formal specification, or learned predictor) to check whether $a_t$ is safe. If unsafe, it corrects the action to the nearest safe alternative $a'_t$, guaranteeing no constraint violation.}
\label{fig:safety_shield}
\end{figure}

\begin{itemize}
    \item \textbf{Safety Shield / Filter:} A mechanism that monitors the agent’s chosen action and overrides it if it is deemed unsafe. The override might be a safe default action or the closest safe action. Dalal \emph{et al.} \cite{Dalal2018} introduced a \emph{safety layer} that solves a quadratic program (QP) in continuous action spaces to minimally adjust the action such that predicted next state stays within safety bounds. This method guaranteed zero constraint violations during training on those tasks. However, it requires a model (or learned model) to predict constraint violations.

    \item \textbf{Shielding via formal methods:} Alshiekh \emph{et al.} \cite{alshiekh2018safe} and later ElSayed-Aly \emph{et al.} \cite{ElSayed2021} (extended to multiagents) use formal verification and temporal logic to build shields. The idea is to pre-compute a set of forbidden state-action pairs using model checking of an abstract model, or to synthesize a runtime observer from a formal specification. The shield then blocks any action that would lead into a bad state (violating the LTL safety specification) in finite steps. In multi-agent settings, as \cite{ElSayed2021} shows, one can have a centralized shield watching over joint actions or distributed shields for each agent.

    \item \textbf{Human or Oracle intervention:} In practical scenarios, one may employ a human overseer or a safety oracle to intervene when the agent is about to do something unsafe. While not a scalable solution for all time, during training it can prevent disasters. Safe RL with human intervention was studied in \cite{Saunders2018} where a human can cancel dangerous actions, and the agent is penalized for those. Over time the agent learns to avoid actions that would have been blocked.
\end{itemize}
    
Shielding approaches have the advantage of hard safety (no violations in theory), but they often rely on having additional knowledge: either a dynamics model, a predefined safe set, or an external supervisor. They also may introduce performance bias (the agent might become too conservative if the shield is not carefully designed, since it never experiences certain parts of state space). Combining shielding with CMDP-based learning is an interesting direction: one can use shielding in early training and gradually lift it as the agent’s own policy becomes safe with learned constraints.

\subsection{Risk-Sensitive and Distributional Methods} Although our focus is on constraint-based SafeRL, a brief mention of risk-sensitive RL is warranted as an alternative approach:
In risk-sensitive RL, instead of constraints, the optimization criterion itself is altered to account for risk. For example, one might maximize $U^{-1}( E[ U(\sum r) ] )$ where $U$ is a concave utility (exponential utility gives risk-aversion) or maximize $\text{CVaR}_{\alpha}(\sum r)$ of return at some confidence level $\alpha$.
Tamar \emph{et al.} \cite{Tamar2015} and others have developed policy gradient methods for CVaR (see \cite{Buerle2023MarkovDP} for an overview of risk-sensitive criteria in MDPs). These effectively try to ensure with high probability the return is above some level, which is conceptually similar to constraints on probabilities of bad events.
\textbf{Distributional RL} (as popularized by \cite{Bellemare2017}) learns the full distribution of returns. One can combine distributional RL with safety by focusing on the lower tail of the return distribution to ensure it is above some threshold. This is another way to encode safety without explicit constraints.
Risk-sensitive methods can sometimes be converted into CMDP style constraints. For instance, requiring CVaR$(\text{cost}) \le d$ is a constraint on a specific risk measure of cost. Solving such constraints often introduces auxiliary variables or uses sample-based approximations. While we do not detail these methods here, they are part of the broader SafeRL toolbox.

\subsection{Safe Multi-Agent Reinforcement Learning (SafeMARL)} SafeMARL extends the ideas above to multi-agent systems \cite{marl-book,weiss1999multiagent,wooldridge2009introduction,shoham2008multiagent}. We consider environments with $N$ agents, indexed by $i\in{1,\dots,N}$. A convenient formal model is a \textbf{constrained Markov game}, defined by $(\mathcal{S}, {\mathcal{A}_i}, P, {r_i}, {c_i^{(j)}}, \gamma)$. Here each agent $i$ chooses an action $a_i \in \mathcal{A}_i$, forming a joint action $\mathbf{a}=(a_1,\dots,a_N)$ that causes state transitions via $P(s'|s,\mathbf{a})$. Each agent can receive an individual reward $r_i(s,\mathbf{a})$ and has possibly its own set of cost functions $c_i^{(j)}(s,\mathbf{a})$ for $j=1\dots m_i$.

SafeMARL scenarios can be cooperative, competitive, or mixed
\begin{itemize}
    \item In fully \textbf{cooperative SafeMARL}, all agents share a common reward (or their rewards are aligned) and typically the safety constraints are also shared or at least all agents are interested in satisfying all constraints. For example, a team of robots might have a joint goal (maximize sum of rewards) and constraints like “no collisions among any robots” which is a global safety constraint.
    \item In \textbf{competitive or general-sum SafeMARL}, each agent has its own reward to maximize, and constraints might be individual (each agent has its own safety requirement) or shared (environment-level safety that everyone needs to uphold, like traffic rules). The solution concept might be a safe equilibrium \cite{Ganzfried2022} (e.g., a Nash equilibrium that respects constraints, related works \cite{AltmanEtAl2000_ConstrainedMarkovGames,Mccracken2004SafeSF} consider constrained Markov games and safe strategies for players) rather than a single policy optimization.
Most existing work in SafeMARL addresses cooperative settings, since even standard MARL is most tractable in either fully cooperative (centralized training for a team) or fully competitive (two-player zero-sum) cases. 
\end{itemize}
We highlight a few key approaches (see Figure~\ref{fig:safemarl_arch} for an architectural comparison):

\begin{figure}[t!]
\centering
\resizebox{\textwidth}{!}{%
\begin{tikzpicture}[
  every node/.style={font=\small},
  agent/.style={circle, draw=blue!70!black, fill=blue!10, minimum size=0.9cm,
    line width=0.7pt, font=\scriptsize\bfseries},
  ctrl/.style={rectangle, draw=purple!70!black, fill=purple!10, rounded corners=3pt,
    minimum height=0.9cm, minimum width=2.8cm, align=center, line width=0.8pt},
  envbox/.style={rectangle, draw=teal!70!black, fill=teal!8, rounded corners=3pt,
    minimum height=0.8cm, minimum width=2.4cm, align=center, line width=0.7pt},
  cbox/.style={rectangle, draw=red!60!black, fill=red!8, rounded corners=2pt,
    minimum height=0.6cm, align=center, line width=0.6pt, font=\scriptsize},
  arr/.style={-{Stealth[length=2mm]}, line width=0.6pt, gray!60!black},
  lbl/.style={font=\scriptsize, fill=white, inner sep=1pt},
  title/.style={font=\small\bfseries, text=gray!70!black},
]

% === LEFT: Centralized ===
\node[title] at (-3.5,3.5) {(a) Centralized SafeMARL};

% Central controller
\node[ctrl] at (-3.5,2.2) (cc) {\textbf{Central Controller}\\Joint policy $\pi(\mathbf{a}|s)$};

% Global constraint
\node[cbox, minimum width=3.0cm] at (-3.5,0.8) (gc) {Global constraint: $J_{c}^{\text{global}}(\pi) \leq d$};

% Agents
\node[agent] at (-5.0,-0.8) (a1) {$i{=}1$};
\node[agent] at (-3.5,-0.8) (a2) {$i{=}2$};
\node[agent] at (-2.0,-0.8) (a3) {$i{=}N$};
\node[font=\scriptsize] at (-2.75,-0.8) {$\cdots$};

% Environment
\node[envbox] at (-3.5,-2.2) (env1) {\textbf{Shared Environment}};

% Arrows: controller -> agents
\draw[arr] (cc.south) -- (gc.north);
\draw[arr] (gc.south) -- ++(0,-0.4) -| (a1.north);
\draw[arr] (gc.south) -- (a2.north);
\draw[arr] (gc.south) -- ++(0,-0.3) -| (a3.north);

% Agents -> environment (straight diagonal to env top, not entering box)
\draw[arr, blue!50] (a1.south) -- ([xshift=-0.6cm]env1.north);
\draw[arr, blue!50] (a2.south) -- (env1.north);
\draw[arr, blue!50] (a3.south) -- ([xshift=0.6cm]env1.north);

% Environment -> controller (global state)
\draw[arr, teal!70!black] (env1.west) -- ++(-1.3,0) |- node[lbl, pos=0.25, left] {\scriptsize Global $s$} (cc.west);

% === RIGHT: Decentralized ===
\node[title] at (4.0,3.5) {(b) Decentralized SafeMARL};

% Agents with local policies
\node[agent, fill=blue!15] at (2.2,2.2) (d1) {$i{=}1$};
\node[agent, fill=blue!15] at (4.0,2.2) (d2) {$i{=}2$};
\node[agent, fill=blue!15] at (5.8,2.2) (d3) {$i{=}N$};
\node[font=\scriptsize] at (4.9,2.2) {$\cdots$};

% Local policies labels (beside agent -> constraint arrows)
% Will be placed on the arrows below

% Local constraints
\node[cbox] at (2.2,0.8) (lc1) {Local $J_{c_1} \leq d_1$};
\node[cbox] at (5.8,0.8) (lc3) {Local $J_{c_N} \leq d_N$};

% Communication links (between agents)
\draw[{Stealth[length=1.5mm]}-{Stealth[length=1.5mm]}, orange!70!black, line width=0.6pt, dashed]
  (d1.east) -- node[above, font=\tiny, text=orange!60!black] {comm} (d2.west);
\draw[{Stealth[length=1.5mm]}-{Stealth[length=1.5mm]}, orange!70!black, line width=0.6pt, dashed]
  (d2.east) -- ++(0.4,0);

% Agents -> local constraints (with π labels beside arrows)
\draw[arr, red!50] (d1.south) -- node[lbl, right, font=\scriptsize, text=blue!60!black] {$\pi_1$} (lc1.north);
\draw[arr, red!50] (d3.south) -- node[lbl, right, font=\scriptsize, text=blue!60!black] {$\pi_N$} (lc3.north);

% Environment
\node[envbox] at (4.0,-1.5) (env2) {\textbf{Shared Environment}};

% Local constraints -> environment (end towards top middle, not corners)
\draw[arr, blue!50] (lc1.south) -- ++(0,-0.55) -| ([xshift=-0.5cm]env2.north);
\draw[arr, blue!50] (lc3.south) -- ++(0,-0.55) -| ([xshift=0.5cm]env2.north);

% Environment -> agents (local observations, routed wide around constraint boxes)
\draw[arr, teal!70!black] (env2.west) -- ++(-2.2,0) |- node[lbl, pos=0.25, left] {\scriptsize $o_1$} (d1.west);
\draw[arr, teal!70!black] (env2.east) -- ++(2.2,0) |- node[lbl, pos=0.25, right] {\scriptsize $o_N$} (d3.east);

% Kappa-hop annotation
\node[font=\scriptsize\itshape, text=orange!60!black] at (4.0,3.0) {$\kappa$-hop neighborhood};

\end{tikzpicture}%
}% end resizebox
\caption{Comparison of centralized vs.\ decentralized SafeMARL architectures. (a)~Centralized: a central controller observes global state and optimizes a joint policy subject to a global safety constraint (e.g., MACPO). (b)~Decentralized: each agent $i$ has a local policy $\pi_i$, local observations $o_i$, and local constraint approximations. Agents coordinate via limited communication within a $\kappa$-hop neighborhood.}
\label{fig:safemarl_arch}
\end{figure}

\begin{proposition}[{\bf Safe MARL Monotonic Improvement Guarantee~\cite{Gu2023MultiRobot}}]
In the cooperative setting, a team of $n$ agents aims to maximize a joint reward $J(\pi)$ while ensuring that each agent $i$ satisfies its own set of cost constraints
$
J_{C_j^i}(\pi) \le c_j^i,\ \forall i\in\{1,\ldots,n\},\ j\in\mathcal{C}_i.
$
The paper introduces a \emph{Safe Multi-Agent Policy Iteration} procedure that provides the following guarantees at every iteration $k$:
\begin{enumerate}
\item $J(\pi_{k+1}) \ge J(\pi_k)$ \quad (monotonic reward improvement),
\item $J_{C_j^i}(\pi_k) \le c_j^i$ for all $i,j$ \quad (per-iteration constraint satisfaction).
\end{enumerate}
These guarantees are achieved through a sequential update scheme in which each agent $i_h$ solves a constrained optimization problem with agent-wise KL trust region radii chosen to ensure that every other agent's constraint remains bounded by its threshold.
\end{proposition}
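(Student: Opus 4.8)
The plan is to reduce the multi-agent guarantee to repeated application of the single-agent trust-region bounds already established in this section (the Policy Performance Bounds proposition and the CPO Trust Region Safety Guarantee), chained together through a \emph{sequential} update scheme. The central algebraic tool I would invoke first is the multi-agent advantage decomposition lemma: fixing an ordering $i_1,\dots,i_n$ of the agents, the joint advantage of any policy $\pi$ factorizes as a telescoping sum
\[
A^{\pi}(s,\mathbf{a}) = \sum_{h=1}^{n} A^{\pi}_{i_h}\!\big(s,\, \mathbf{a}_{i_{1:h-1}},\, a_{i_h}\big),
\]
where $\mathbf{a}_{i_{1:h-1}}$ collects the actions of the preceding agents $i_1,\dots,i_{h-1}$ and $A^{\pi}_{i_h}$ is the local advantage of agent $i_h$ conditioned on those committed actions. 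This identity lets the joint surrogate objective be written as a sum of per-agent surrogates, so that when agent $i_h$ updates $\pi^{i_h}$ while holding fixed the already-updated policies of $i_1,\dots,i_{h-1}$ and the old policies of $i_{h+1},\dots,i_n$, its local improvement can be analyzed exactly as a single-agent trust-region step.

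For the monotonic reward improvement (claim~1), I would apply the reward lower bound \eqref{eq:reward_bound} to each sequential single-agent update. Summing the per-agent surrogate gains over $h=1,\dots,n$ and invoking the decomposition yields a lower bound on $J(\pi_{k+1})-J(\pi_k)$ of the form ``sum of local surrogate advantages minus a penalty controlled by the per-agent KL radii.'' Because each agent solves its subproblem over a trust region and the no-op (leaving $\pi^{i_h}$ unchanged) is always feasible with zero surrogate advantage, every agent's optimized local gain is nonnegative; choosing the radii small enough that the aggregate penalty cannot overturn these gains then forces $J(\pi_{k+1}) \ge J(\pi_k)$.

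For per-iteration constraint satisfaction (claim~2), I read the stated condition inductively: assuming $\pi_k$ is feasible, the goal is $J_{C_j^i}(\pi_{k+1}) \le c_j^i$ for all $i,j$, which propagates feasibility to every iterate. The key is the cost upper bound \eqref{eq:cost_bound}, applied to each agent's own cost functions, with each subproblem carrying the CPO-style surrogate cost constraint $J_{C_j^i}(\pi_k) + \tfrac{1}{1-\gamma}\mathbb{E}_{s\sim d^{\pi_k},\,a\sim\pi}[A^{\pi_k}_{C_j^i}(s,a)] \le c_j^i$. The plan is to bound the total-variation distance between the joint policies $\pi_k$ and $\pi_{k+1}$ by the sum of per-agent divergences (using subadditivity of TV over the product factorization $\pi=\prod_h \pi^{i_h}$), convert this to KL via Pinsker as in Step~(2) of the CPO proof, insert it into \eqref{eq:cost_bound}, and choose the agent-wise radii to allocate the available safety slack $c_j^i - J_{C_j^i}(\pi_k)$ across agents so that the accumulated penalty is absorbed.

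The hard part will be making the cross-agent coupling rigorous: unlike the single-agent case, where a single TV term suffices, here each $J_{C_j^i}$ is sensitive to \emph{every} agent's move through the induced shift in $d^{\pi}$, so one must show that the sum of individually-controlled trust regions still yields a joint distribution shift small enough for all $\sum_i |\mathcal{C}_i|$ constraints simultaneously. Concretely, the obstacle is selecting the agent-wise radii $\delta_{i_h}$ so that the worst-case accumulated second-order term $\sum_h \tfrac{\sqrt{2\delta_{i_h}}\,\gamma\,\epsilon}{(1-\gamma)^2}$ stays below the tightest slack, while keeping the radii large enough that the reward subproblems admit nontrivial improving steps. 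Establishing feasibility of this joint allocation --- and arguing that an early agent's update cannot silently consume the entire safety budget of a later agent --- is the crux that separates this result from a naive concatenation of $n$ independent CPO steps.
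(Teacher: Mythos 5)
Your overall architecture matches the paper's proof: the same multi-agent advantage decomposition lemma, the same sequential per-agent update scheme, and, for feasibility, the same device of a cost-change bound whose penalty term sums the KL divergences of \emph{all} agents' updates (the paper's Lemma~2 from \cite{Gu2023MultiRobot}), followed by an allocation of each constraint's slack $c_j^i - J_{C_j^i}(\pi_k)$ across the agent-wise radii. Your identification of the crux --- that every cost $J_{C_j^i}$ is perturbed by every agent's move, so the radii must be chosen jointly so that no early agent silently consumes a later constraint's budget --- is exactly what the paper's Step~3 resolves.

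However, your argument for claim~1 (monotonic reward improvement) has a genuine gap. You lower-bound $J(\pi_{k+1})-J(\pi_k)$ by (sum of optimized surrogate gains) minus (a penalty controlled by the radii), note the gains are nonnegative because the no-op is feasible, and then claim that ``choosing the radii small enough'' prevents the penalty from overturning the gains. This reasoning is circular: the attainable surrogate gains shrink together with the radii, so you cannot hold the gains fixed while sending the radii to zero. In the degenerate case where $\pi_k$ already maximizes the surrogate (gain $=0$ for every radius), any update with positive KL incurs a strictly positive penalty and your lower bound becomes negative --- no choice of radii repairs this, and a trust-region argmax is not forced to return the no-op among surrogate ties. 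The paper avoids the issue entirely with the MM-style argument: each agent maximizes the \emph{penalized} surrogate $L^{\,i_{1{:}h}}_{\pi_k}(\pi_{i_{1{:}h-1}}^{k+1},\cdot) - \nu\, D^{\max}_{\mathrm{KL}}(\pi_{i_h}^k,\cdot)$, and since this penalized objective equals $0$ at the old policy $\pi_{i_h}^k$, its maximum is $\ge 0$; summing over $h$ gives $J(\pi_{k+1}) \ge J(\pi_k)$ unconditionally, with no radius tuning needed for the reward claim (the radii matter only for the cost claim). Replace your ``small radii'' step with this comparison-to-the-no-op on the penalized objective and the rest of your plan goes through as in the paper.
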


\begin{proof}[Sketch of proof]
\textbf{Step 1: Multi-agent surrogate decomposition.}
Let $A_\pi$ be the joint advantage under the current joint policy $\pi_k$.
By the multi-agent advantage decomposition (Lemma~1 in \cite{Gu2023MultiRobot}), for any ordering $i_1{:}n$,
\begin{align*}
&\mathbb{E}_{s\sim\rho^{\pi_k},\, a\sim \pi_{k+1}}\big[A_{\pi_k}(s,a)\big] \\
&= \sum_{h=1}^n
\mathbb{E}_{s\sim\rho^{\pi_k},
\, a_{i_{1{:}h-1}}\sim \pi_{i_{1{:}h-1}}^{k+1},
\, a_{i_h}\sim \pi_{i_h}^{k+1}}
\Big[
A^{\,i_h}_{\pi_k}\!\big(s, a_{i_{1{:}h-1}}, a_{i_h}\big)
\Big].
\end{align*}
Define the one-agent surrogate
\[
L^{\,i_{1{:}h}}_{\pi_k}(\pi_{i_{1{:}h-1}}^{k+1},\pi_{i_h})
\!=\!
\mathbb{E}_{s,a}\!\big[
A^{\,i_h}_{\pi_k}(s,a_{i_{1{:}h-1}},a_{i_h})
\big].
\]
A standard TRPO bound (applied in the multi-agent setting) gives
\begin{align*}
J(\pi_{k+1})
\,\ge\, J(\pi_k)\;+\;\sum_{h=1}^n
&\Big\{L^{\,i_{1{:}h}}_{\pi_k}(\pi_{i_{1{:}h-1}}^{k+1},\pi_{i_h}^{k+1}) \\
&\;-\;\nu\, D^{\max}_{\mathrm{KL}}\!\big(\pi_{i_h}^k,\pi_{i_h}^{k+1}\big)
\Big\},
\end{align*}
for $\nu=\tfrac{4\gamma}{(1-\gamma)^2}\max_{s,a}|A_{\pi_k}(s,a)|$.

\textbf{Step 2: Sequential argmax and the identity $L=0$ at the old policy.}
By construction, $L^{\,i_{1{:}h}}_{\pi_k}(\pi_{i_{1{:}h-1}}^{k+1},\pi_{i_h}^{k})=0$.
Each agent update $\pi_{i_h}^{k+1}$ is chosen to \emph{maximize} the penalized surrogate
$L^{\,i_{1{:}h}}_{\pi_k}(\pi_{i_{1{:}h-1}}^{k+1},\cdot)-\nu D^{\max}_{\mathrm{KL}}(\pi_{i_h}^k,\cdot)$,
so replacing $\pi_{i_h}^{k+1}$ by $\pi_{i_h}^k$ yields a lower value. Summing over $h$ gives
$J(\pi_{k+1}) \ge J(\pi_k)$, proving \emph{monotonic improvement}.

\textbf{Step 3: Per-iteration feasibility via cost surrogates + KL budgets.}
For each agent $i$ and cost index $j$, Lemma~2 in \cite{Gu2023MultiRobot} bounds the change of cost under a joint update:
\[
J^i_{C_j}(\pi_{k+1}) \;\le\; J^i_{C_j}(\pi_k)
\;+\;L^{\,i}_{C_j,\pi_k}(\pi_i^{k+1})
\;+\;\nu^i_j\,\sum_{\ell=1}^n D^{\max}_{\mathrm{KL}}(\pi_\ell^k,\pi_\ell^{k+1}),
\]
with $\nu^i_j=\tfrac{4\gamma}{(1-\gamma)^2}\max_{s,a_i}|A^{\,i}_{C_j,\pi_k}(s,a_i)|$.
Choosing agent-wise KL radii $\delta_{i_h}$ ensures that, starting from a feasible $\pi_k$,
the sequential update of $i_h$ keeps every other agent's constraint bounded by its threshold; hence
$J^i_{C_j}(\pi_{k+1})\le c_j^i$ for all $i,j$. This proves \emph{per-iteration constraint satisfaction}.

Combining Steps 1--3 yields the proposition.
\end{proof}

\begin{itemize}
    \item \textbf{Centralized Training with Global Constraints:} A straightforward extension of single-agent SafeRL to multi-agent cooperative tasks is to treat the entire multi-agent system as one big agent with a joint action \cite{marl-book}. One can then apply CMDP methods on the joint system. For example, one can define a joint policy $\pi(\mathbf{a}|s)$ and a global cost $c_{\text{global}}(s,\mathbf{a})$ that encodes any violation by any agent. Then apply CPO or Lagrange methods on this joint policy. This was essentially the approach in the MACPO algorithm\footnote{\url{https://github.com/chauncygu/Multi-Agent-Constrained-Policy-Optimisation}} by \cite{Gu2023MultiRobot}: they derived a multi-agent version of the CPO update (ensuring monotonic improvement in team reward and satisfaction of safety constraints). In practice, they implemented MACPO with two variants: one using a centralized critic (accessible during training) that estimates global reward and cost, and another using a factorized approach (MAPPO-Lagrangian, See Lemma 1: Multiagent advantage decompositon in \cite{Gu2023MultiRobot} ) which is simpler and uses decentralized advantage estimates with a Lagrange penalty for costs. The challenge with centralized approaches is the scalability: the joint action space grows exponentially with number of agents, and a centralized policy might be impractical for many agents. It also requires a central controller during training (and possibly execution) that knows all agent's states, which might not be available in all applications.

    \item \textbf{Decentralized Safe Learning with Coordination:} An important research direction is how to achieve safe multi-agent learning without relying on a central entity or a global state accessible to all. Recent work by Zhang \emph{et al.} \cite{Zhang2024Scalable} introduced a scalable constrained policy optimization where each agent optimizes a localized objective that approximates the global safety. They use the concept of $\kappa$-hop neighborhood (each agent coordinates with others within $\kappa$ hops in a communication graph) to truncate the dependence on far-away agents. They proved that if each agent optimizes a local policy with these truncated safety constraints and updates sequentially, the overall system still improves reward and satisfies constraints. The resulting algorithm (Scalable MAPPO-Lagrangian) shows promising results on large multi-agent environments, demonstrating that strict centralization is not always necessary for SafeMARL. Another method for decentralization is to factor the safety constraints: \cite{ElSayed2021} did this via shields for subsets of agents. In general, one can attempt to decompose a global constraint into local constraints for each agent. For example, a global cost $c_{\text{global}}(s,\mathbf{a})$ might be decomposed as $c_{\text{global}} = \sum_i c_i(s,a_i)$ if the unsafe events are localized per agent. Then each agent could constrain its own $c_i$. However, not all safety constraints are additively separable; many (like collision avoidance) are inherently about joint configurations. This remains a hard problem: designing local reward/cost structures whose alignment with global safety yields provable guarantees.

    \item \textbf{Multi-agent Credit Assignment for Safety:} In multi-agent RL, credit assignment (determining each agent's contribution to global reward) is crucial. Similarly, for safety, one might need to assign “blame” or responsibility to individual agents for a safety violation. Approaches like difference rewards \cite{wolpert2002optimal,tumer2004survey} or shaped team rewards \cite{devlin2011potential} can be used to ensure each agent gets feedback about how its actions affected the global outcome. For SafeRL, one could design each agent’s cost signal such that it corresponds to the marginal increase in global risk due to that agent. Some initial works have considered approaches where each agent considering the safety constraints of others \cite{Gu2023MultiRobot}, though a general solution is open research (we outline this as a problem later).

    \item \textbf{Safe Equilibria and Non-Cooperative Agents:} For competitive settings, one could consider each agent solving its own CMDP subject to safety constraints, leading to a game where each agent's strategy must satisfy its own constraints. The concept of a \emph{constrained Nash equilibrium} arises: a profile of policies $\pi_1,\dots,\pi_N$ such that no agent can improve its reward without violating constraints given the other's policies. Algorithms to compute such equilibria are not well-developed; this might involve ideas from game theory (like best response dynamics with constraints or Lagrangian for each agent). One example in literature is safe multi-agent learning via Stackelberg games: one agent (leader) accounts for the follower's response. \cite{Zheng2024} apply a bilevel optimization (Stackelberg) to model an autonomous driving scenario with safety, effectively solving a two-agent safe RL where the vehicles plans with knowledge of the other’s constraints (for example, in road intersection environments). This is a rich area for future investigation.

    \item \textbf{Benchmarking SafeMARL:} The progress in SafeMARL has been accelerated by the introduction of benchmarks. Gu \emph{et al.} \cite{Gu2023MultiRobot} provided \emph{Safe Multi-Agent MuJoCo\footnote{\url{https://github.com/chauncygu/Safe-Multi-Agent-Mujoco}}}, \emph{Safe MARobosuite}, and \emph{Safe MA-IsaacGym}, which are multi-robot simulation tasks with safety constraints (like torque limits or collision constraints). These environments allow systematic evaluation of SafeMARL algorithms in settings requiring coordination. Similarly, for single-agent SafeRL, OpenAI’s \emph{Safety Gym\footnote{\url{https://github.com/openai/safety-gym}}} \cite{Ray2019} introduced a suite of continuous control tasks with hazards and constraints, which has become a standard testbed. More recently, \emph{Safety Gymnasium} \cite{ji2023safety} provides a modernized successor with Gymnasium API support, and algorithm libraries such as \emph{OmniSafe} \cite{ji2023omnisafeinfrastructureacceleratingsafe} and \emph{Safe Policy Optimization (SafePO)} \cite{ji2023safety} offer unified implementations of constrained RL baselines. For safe control benchmarking, \emph{Safe-Control-Gym} \cite{Yuan2022} integrates constraint-aware control tasks for robotics. Multi-agent driving environments such as \emph{SMARTS} \cite{zhou2020smarts} and single-agent driving suites like \emph{Highway-env} \cite{highway-env} further expand the available testbeds for safe RL research.

\end{itemize}
In summary, state-of-the-art SafeRL methods range from modified policy gradient algorithms (with theoretical guarantees like CPO) to pragmatic penalty-based methods, model-based safe exploration, and safety layers, whereas SafeMARL is exploring centralized vs. decentralized learning, coordination mechanisms, and safe policy equilibrium concepts. Table \ref{tab:methods} provides a high-level summary of key selected algorithms in SafeRL and SafeMARL.
\begin{table}[t!] \centering \small
\rowcolors{2}{teal!8}{white}
\begin{tabular}{p{4cm} p{9cm}}
\hline
\rowcolor{teal!20}
\textbf{Method/Algorithm} & \textbf{Description and Key Features} \\
\hline

Lagrangian actor-critic \cite{Tessler2019} & Add constraint cost as penalty to reward; update $\lambda$ online. Simple but may violate constraints before convergence. \\

Constrained Policy Optimization (CPO) \cite{Achiam2017} & Trust-region policy updates with theoretical guarantee of near-constraint satisfaction each iteration. Uses second-order approximations to ensure safety. \\

Lyapunov-based Policy Optimization \cite{Chow2018} & Uses a Lyapunov function (cost critic) to constrain updates. Guarantees decrease in an upper bound of cost. \\

Reward Constrained DQN \cite{Tessler2019} & DQN with a reward penalty for constraint, ensuring discrete actions respect cost limit in expectation. \\

Safe DDPG/TD3 (Lagrangian) & Extends continuous control off-policy algorithms with cost critics and Lagrange multipliers for constraints. \\

Safe Model-Based RL \cite{Berkenkamp2017} & Uses model uncertainty estimates and stability analysis to allow only proven-safe explorations. Ensures no violations under certain dynamics assumptions. \\ 

Safety Layer (action shield) \cite{Dalal2018} & A differentiable layer that projects chosen actions to the nearest safe action by solving a QP. Guarantees zero immediate violations given local dynamics linearization. \\

Shielding (LTL) \cite{ElSayed2021} & Pre-compute shields from formal specifications; filter multi-agent joint actions to avoid unsafe outcomes. Achieves provable safety with respect to spec. \\

Multi-Agent CPO (MACPO) \cite{Gu2023MultiRobot} & Extension of CPO for multi-agent teams. Centralized training, uses a joint policy or coordinated update. Demonstrated on multi-robot tasks. \\

Scalable Decentralized Safe MARL \cite{Zhang2024Scalable} & Each agent optimizes a local surrogate constrained problem using truncated observation of others. Achieves near-centralized performance with better scalability. \\ 

Safe MARL via Bilevel (Stackelberg) \cite{Zheng2024} & Models one agent as leader, others as followers in a game with safety constraints. Solves via bilevel optimization to account for interactive safety. \\

Safe MARL with Shielding \cite{ElSayed2021} & Combines MARL with runtime shielding (central or factored) to ensure no unsafe joint actions are taken during learning. \\ 
\hline 
\end{tabular} 
\caption{Representative SafeRL (single-agent) and SafeMARL (multi-agent) methods.} \label{tab:methods} 
\end{table}
\section{Safe RL and Safe MARL Libraries}
In Tables~\ref{tab:safeRL_libs} and \ref{tab:safeRL_envs}, we show various available popular libraries and environments for further research in safe RL and safe MARL.

\paragraph{Taxonomy of Safe Reinforcement Learning Libraries and Environments.}
To consolidate the growing ecosystem of reproducible and standardized implementations in Safe Reinforcement Learning (SafeRL) and Safe Multi-Agent Reinforcement Learning (SafeMARL),
Tables~\ref{tab:safeRL_libs} and~\ref{tab:safeRL_envs} summarize the major algorithmic libraries and environment suites currently used by the research community.
Table~\ref{tab:safeRL_libs} lists the most widely adopted open-source frameworks that implement constrained or risk-aware reinforcement learning algorithms under the Constrained Markov Decision Process (CMDP) formulation.
These include high-quality benchmark suites such as \emph{OmniSafe}, \emph{Safe Policy Optimization (SafePO)}, \emph{Safety Starter Agents}, and \emph{Safe-Control-Gym}.
Each library is characterized by its category, algorithmic focus, GitHub repository, and the environments it supports.
The table also provides typical application domains ranging from robotics and navigation to multi-agent coordination and associated references, thereby serving as a guide to reproducible SafeRL experimentation and comparison across methods.

\paragraph{Benchmarking Environments and Datasets.}
Complementing the algorithmic libraries, Table~\ref{tab:safeRL_envs} categorizes the principal benchmark environments and datasets that provide structured safety signals for policy learning.
These include the \emph{Safety-Gymnasium} and the legacy \emph{OpenAI Safety Gym} for constrained navigation and manipulation,
as well as the \emph{AI Safety Gridworlds} and \emph{SafeLife} platforms that test specification robustness and side-effect avoidance in discrete domains.
More complex continuous and multi-agent settings are covered by simulation frameworks such as \emph{SMARTS} for autonomous driving and \emph{Highway-env} for risk-sensitive control.
Each environment entry specifies whether it supports single-agent or multi-agent training, the structure of safety constraints or cost signals, representative tasks, and intended application areas.
Together, these two tables form a comprehensive taxonomy of tools that underpin experimental research in safe and trustworthy reinforcement learning.

\begin{table}[t!]
\centering
\renewcommand{\arraystretch}{1.2}
\setlength{\tabcolsep}{3pt}
\rowcolors{2}{teal!8}{white}
\caption{Taxonomy of Safe Reinforcement Learning and Multi-Agent Safety Libraries.}
\label{tab:safeRL_libs}
\small
\begin{tabular}{p{2.2cm} p{2.5cm} p{4.0cm} p{4.0cm}}
\hline
\textbf{Name} & \textbf{Category} & \textbf{Algorithms / Capabilities} & \textbf{Application Areas} \\
\hline
\textbf{OmniSafe} \cite{ji2023omnisafeinfrastructureacceleratingsafe} & Algorithm Library (Safe RL) & PPO/TRPO-Lagrangian, CPO, NPG-Lag, SAC-Lag; experiment management & Robotics, safe navigation \\
\textbf{SafePO} \cite{ji2023safety} & Safe RL \& Safe MARL & Lagrangian \& CPO-style variants; MARL support; training pipelines & Multi-agent coordination, safe control \\
\textbf{Safety Starter Agents} \cite{Ray2019} & Baselines / Reference & CPO, PPO-Lag, TRPO-Lag, DDPG-Lag baselines & Benchmarking baselines \\
\textbf{Safe-Control-Gym} \cite{Yuan2022} & Benchmarking Suite & Constraint-aware control tasks; integrates SB3 and RL agents & Safe robotics \& continuous control \\
\hline
\end{tabular}
\end{table}

\begin{table}[t!]
\centering
\renewcommand{\arraystretch}{1.2}
\setlength{\tabcolsep}{3pt}
\rowcolors{2}{teal!8}{white}
\caption{Taxonomy of Environments and Datasets for Safe RL and Safe MARL.}
\label{tab:safeRL_envs}
\small
\begin{tabular}{p{2.2cm} p{1.2cm} p{3.5cm} p{3.0cm} p{2.5cm}}
\hline
\textbf{Name} & \textbf{Agent Type} & \textbf{Key Features / Signals} & \textbf{Example Tasks} & \textbf{Citations} \\
\hline
\textbf{Safety Gymnasium} & Both & Cost signals \& constraints (hazards); Gymnasium API & Goal/Button/Push with hazards & \cite{ji2023safety} \\
\textbf{OpenAI Safety Gym} & Single & Original cost/constraint tasks (archived) & Point/Car/Doggo: Goal/Button/Push & \cite{Ray2019} \\
\textbf{AI Safety Gridworlds} & Single & Side-effect \& reward specification tests & Multiple gridworlds for specification failures & \cite{leike2017aisafetygridworlds} \\
\textbf{SafeLife} & Single & Procedurally generated levels; side-effect metrics & Life-like tasks minimizing side effects & \cite{wainwright2021safelife10exploringeffects} \\
\textbf{SMARTS} & Multi & Traffic simulation, risk modeling for multi-agent driving & Merging, intersection, adversarial driving & \cite{zhou2020smarts} \\
\textbf{Highway-env} & Single & Collision-avoidance objectives; Gym-compatible & Highway, merge, roundabout, parking & \cite{highway-env} \\
\hline
\end{tabular}
\end{table}

\section{Open Research Challenges and Future Directions} While significant progress has been made in SafeRL and SafeMARL, many challenges remain open. In this section, we present five research problems that, if solved, would substantially advance the field (see Figure~\ref{fig:open_problems}). Three of these pertain specifically to SafeMARL, reflecting the newer nature of multi-agent safety. For each problem, we describe the motivation, outline possible approaches (steps toward a solution), and reference relevant prior work to build upon.

\begin{figure}[t!]
\centering
\begin{tikzpicture}[
  every node/.style={font=\small},
  problem/.style={rectangle, draw=#1!70!black, fill=#1!8, rounded corners=4pt,
    minimum height=1.1cm, minimum width=3.6cm, align=center, line width=0.8pt},
  problem/.default=blue,
  conn/.style={{Stealth[length=2mm]}-{Stealth[length=2mm]}, line width=0.5pt,
    gray!40, dashed},
  tag/.style={font=\tiny\bfseries, text=white, fill=#1!70!black, rounded corners=1.5pt,
    inner sep=2pt},
  tag/.default=blue,
]

% Problem nodes arranged in a pentagon-like layout
\node[problem=orange] (p1) at (0,3.5) {\textbf{P1:} Zero-Violation\\Safe Exploration};
\node[problem=teal] (p2) at (-4.0,1.5) {\textbf{P2:} Safety Under\\Partial Observability};
\node[problem=blue] (p3) at (-3.0,-1.5) {\textbf{P3:} Decentralized\\SafeMARL};
\node[problem=purple] (p4) at (3.0,-1.5) {\textbf{P4:} Competitive\\SafeMARL};
\node[problem=cyan] (p5) at (4.0,1.5) {\textbf{P5:} Non-Stationary\\Multi-Agent Safety};

% Tags
\node[tag=orange, above right=-0.1cm and -0.6cm of p1] {Single-Agent};
\node[tag=teal, above right=-0.1cm and -0.6cm of p2] {Single-Agent};
\node[tag=blue, above right=-0.1cm and -0.6cm of p3] {Multi-Agent};
\node[tag=purple, above right=-0.1cm and -0.6cm of p4] {Multi-Agent};
\node[tag=cyan, above right=-0.1cm and -0.6cm of p5] {Multi-Agent};

% Connections showing relationships
\draw[conn] (p1) -- node[font=\tiny, fill=white, inner sep=1pt, sloped, above] {safe exploration} (p2);
\draw[conn] (p1) -- node[font=\tiny, fill=white, inner sep=1pt, sloped, above] {guarantees} (p5);
\draw[conn] (p2) -- node[font=\tiny, fill=white, inner sep=1pt, sloped, above] {local obs.} (p3);
\draw[conn] (p3) -- node[font=\tiny, fill=white, inner sep=1pt, below] {no central authority} (p4);
\draw[conn] (p4) -- node[font=\tiny, fill=white, inner sep=1pt, sloped, above] {adaptation} (p5);
\draw[conn] (p3) -- node[font=\tiny, fill=white, inner sep=1pt, sloped, above] {coordination} (p5);

% Center annotation
\node[font=\footnotesize\itshape, text=gray!60!black, align=center] at (0,0.8) {Problems are\\interconnected};

\end{tikzpicture}
\caption{Five open research problems in SafeRL and SafeMARL, and their interconnections. Problems~1--2 focus on single-agent challenges (zero-violation guarantees and partial observability), while Problems~3--5 address multi-agent settings (decentralized safety, competitive equilibria, and non-stationarity). Dashed lines indicate how progress on one problem can benefit others.}
\label{fig:open_problems}
\end{figure}

Most SafeRL algorithms guarantee safety in expectation or asymptotically, but they often allow some violations during learning (especially early on). In high-stakes applications, even a single catastrophic failure is unacceptable. The research challenge is to design RL methods that ensure \emph{zero (or provably bounded) constraint violations throughout the entire training process}, without relying on a human in the loop.

Ensuring no violations typically requires either very conservative exploration or prior knowledge (dynamics models, safe baseline policy). Too conservative an approach can severely slow down learning. Balancing caution with exploration is tricky, as overly restrictive safety can trap the policy in a local optima (not exploring better solutions).

The concept of never violating constraints relates to \textbf{safe exploration}. Moldovan and Abbeel (2012) and others studied conditions for “safe policy learning” where certain states are absorbing traps (unsafe) and should be avoided forever. Approach like \textit{learning with a safety critic} \textit{mentor-assisted exploration} \cite{Saunders2018,zhou2018safetyawareapprenticeshiplearning,Curi2020} have been tried. However, a general solution remains elusive, especially for high-dimensional continuous tasks. Solving this problem would likely require combining learning with elements of control theory or formal methods to get the needed guarantees.

Many real-world problems are partially observable (POMDPs) – the agent does not have full knowledge of the true state relevant to safety. Examples: a robot with limited sensors, or an autonomous car that cannot see around corners. In such cases, ensuring safety is harder because the agent might inadvertently take an unsafe action due to missing information. The challenge is to design SafeRL algorithms that operate under uncertainty/partial observability and still guarantee safety.

In a POMDP, the agent typically maintains a belief (distribution over states). Constraints might need to be satisfied with respect to the true state (which is unknown). For instance, we might require that \emph{for all possible true states consistent with the agent’s observations, the safety constraint holds}. This is a very strict condition and can be overly conservative. Alternatively, one might demand a high probability of safety given the belief.
There is work on \textbf{POMDPs with chance constraints}, where constraints must hold with a certain probability. Techniques often convert these into augmented state MDPs by including some memory or using scenario optimization. Another related concept is \textbf{belief shielding}: e.g., using human feedback to avoid ambiguous unsafe states. Solving safe RL in POMDPs could connect to robust control in partially observed systems (like robust Model Predictive Control with chance constraints). This problem remains largely open; progress would benefit fields like autonomous systems operating with imperfect sensors.
In many multi-agent applications, each agent has only partial, local observations (e.g., each car in traffic sees only nearby cars). A central authority that monitors and enforces safety for all agents may not exist. The challenge is to achieve safe multi-agent learning in a \emph{fully decentralized} way: each agent makes decisions based on its local view and (optionally) limited communication, and together their behaviors ensure global safety constraints are respected.

Global safety constraints often involve the joint state of multiple agents (e.g., distance between any two drones must exceed a threshold to avoid collision). No single agent can evaluate the global constraint alone. If communication is limited (bandwidth or range), agents might not know the actions or states of others in time to react safely. Moreover, learning is now on a game (or team) level, complicated by non-stationarity (each agent's environment is affected by others learning simultaneously).

Decentralized MARL has been studied (e.g., independent learners, mean-field MARL), but safety adds extra difficulty. \cite{Zhang2024Scalable} is one of few works aiming at decentralized SafeMARL. Also relevant are \textbf{distributed constrained optimization} \cite{Fioretto2018,chang2014,notarnicola2018,luang2024} in control theory where multiple controllers ensure a global constraint (like distributed frequency control in power grids \cite{Parandehgheibi2016,wang2018} ensuring safety constraints on voltage). Techniques from \textbf{graphical games} or \textbf{networked control systems} could be applied. Success in this problem would directly impact fields like distributed robotics \cite{testa2025tutorialdistributedoptimizationcooperative,wang2022distributedreinforcementlearningrobot} and network safety (e.g., ensuring no network congestion collapse via decentralized RL controllers \cite{Sunassee2021,Kiarie2025}).

SafeRL research has mostly focused on a single agent or cooperative teams. However, in the real world, multiple independent agents (e.g., companies trading stocks with safety limits, or autonomous cars from different manufacturers) may not share a common goal. They might even be adversarial. Each has safety constraints (like not going bankrupt, or not crashing) but they also have competing objectives. The challenge is to extend SafeRL to \textbf{general-sum or competitive environments}, finding appropriate equilibrium notions and algorithms to compute them.

In competitive multi-agent scenarios, one cannot simply optimize a joint objective. Methods like CPO do not directly apply, because improving one agent’s reward might hurt another’s. We need an equilibrium concept like \emph{constrained Nash equilibrium} \cite{xu2025} or \emph{constrained correlated equilibrium} \cite{Boufous2024,Chen2022}. Another issue is that safety for one agent might depend on the behavior of others. If others act recklessly, an agent might be unable to guarantee its safety without overly sacrificing reward (or might need to assume worst-case opponents).
Constrained game equilibria have been studied in economics (e.g., Nash equilibria with budget constraints). In RL, one related field is \textbf{Mean-Field Games with constraints} \cite{cannarsa2018meanfieldgamesstate,Gomez2019,Arjmand2021,Capuani2022}, but results are sparse. Another is \textbf{Multi-agent reinforcement learning for traffic} \cite{Prabuchandran2014,Zeynivand2022,li2024}, where multiple self-interested cars must avoid collisions (a safety constraint) -- some works use hand-crafted rules or potential fields, but learning such behavior while each optimizes their own objective is largely open. If this problem is solved, it could define how autonomous systems from different stakeholders safely coexist (think of air traffic control but without a central controller—planes negotiating to avoid collisions while meeting their own goals).

Consider multi-agent systems that operate over long time scales where the environment or the set of agents may change. For example, a fleet of autonomous vehicles might encounter new types of vehicles or changing traffic rules; or a robotic factory might add/remove robots over time. We need SafeMARL algorithms that can \textbf{adapt to non-stationarity} in the environment or agent population, while preserving safety. This includes scenarios like agents entering or leaving, changes in the dynamics, or even adversarial perturbations.

%\paragraph{Key Difficulties:} 
Non-stationarity breaks the assumptions of convergence for most RL algorithms. SafeRL adds another layer: after training, if something changes and the policy is no longer safe, the agent must detect and correct this quickly (ideally without catastrophic failure during the transition). Multi-agent adds complexity because one agent's non-stationarity (learning or adapting) is another agent's non-stationary environment.
%\paragraph{Related Work:} 
Non-stationary RL is a growing area (sometimes framed as lifelong learning or non-stationary bandits). SafeRL in non-stationary settings has seen little study. One relevant angle is \textbf{robust safe RL}: algorithms that ensure safety under model perturbations (e.g., \cite{Chen2021SafeNonStationary,coursey2025designsafecontinualrl} considered adversarial changes in cost function within limits or continual safety in non-stationary dynamics). Another is \textbf{meta-learning safety}: a recent work by \cite{Grbic2020} attempted to meta-learn a safety critic for quickly evaluating new scenarios. Achieving continual safe learning could pave the way for real-world deployment where conditions are never static.
\subsection*{Summary of Research Directions} The problems outlined above are interconnected. For example, solving safe exploration (Problem 1) will likely benefit safe adaptation (Problem 5); and advances in decentralized safe learning (Problem 3) will be crucial for tackling competitive safe learning (Problem 4) where a central authority is absent. Each problem requires a blend of techniques—RL algorithms, optimization theory, control theory, and even insights from economics or game theory.Crucially, addressing these problems will move SafeRL from a laboratory curiosity to a dependable component of autonomous systems. We expect that success in these areas will result in publishable work at top venues (NeurIPS, ICML, ICRA, etc.), given the importance and difficulty of ensuring safety in learning systems. By formulating them here, we hope to encourage more researchers to contribute to these challenges.

\section{Conclusion} Safe Reinforcement Learning is a vital area of research for deploying learning agents in real-world environments where failures are costly or dangerous. In this survey, we provided a detailed overview of SafeRL with a focus on the CMDP framework for incorporating constraints. We reviewed theoretical foundations including CMDP definitions, Lagrangian duality, and solution methods like linear programming and policy gradient for constrained problems. Building on this foundation, we discussed state-of-the-art SafeRL algorithms such as Constrained Policy Optimization, Lagrange multiplier methods, safe exploration via shielding, and how these have been extended to multi-agent settings.Our survey highlights that: \begin{itemize} 
\item SafeRL is inherently a cross-disciplinary field, drawing from machine learning, optimal control, and formal methods. The CMDP formulation provides a unifying language for many approaches. 
\item There is a rich toolbox of algorithms for single-agent SafeRL that can achieve good performance while respecting constraints, though each has trade-offs in terms of safety guarantees vs. efficiency. 
\item SafeMARL is a frontier with significant potential impact (e.g., fleet management, multi-robot systems). Early algorithms like MACPO and shielding strategies show feasibility, but general solutions for decentralized and competitive scenarios are still lacking. 
\end{itemize}
We identified several open research problems that require further work: from guaranteeing zero violations to handling partial observability, and from fully decentralized safe coordination to safe learning in non-stationary multi-agent environments. These problems underscore that SafeRL is not a solved problem—there are theoretical challenges (ensuring safety and convergence), practical issues (scalability, function approximation errors), and new frontiers (multi-agent interactions). In conclusion, SafeRL offers a pathway to more trustworthy AI systems by marrying reinforcement learning with constraint satisfaction. As RL agents become more capable and autonomous, ensuring their safety will be paramount. We hope this survey serves as a useful resource for researchers to understand the current landscape and to inspire further advances. The continued development of SafeRL methods will help unlock applications of RL in domains that are currently out of reach due to safety concerns, ultimately enabling AI to make beneficial decisions without posing undue risk.

\section*{LLM Usage}
ChatGPT-5.1 free version was used for polishing of initial draft text to improve the English and grammar. Claude Code was used to assist with TikZ figure creation, formatting, and bibliography management.

\bibliographystyle{plainnat}

\bibliography{references}

\end{document}